%% file: main.tex
\pdfoutput=1
\documentclass[10pt,letterpaper]{article}
\usepackage[T1]{fontenc}
\usepackage[utf8]{inputenc}
\usepackage{amsmath,amssymb,amsthm}
\usepackage{newtxtext,newtxmath}
\usepackage[letterpaper,text={5.5in,8.85in},top=1.15in,headheight=13pt,headsep=0.2in,footskip=0.4in]{geometry}
\usepackage{microtype}
\usepackage{graphicx}
\usepackage{booktabs,array,tabularx}
\usepackage[export]{adjustbox}
\usepackage[table]{xcolor}
\usepackage{url}
\usepackage[round,authoryear]{natbib}
\usepackage[explicit]{titlesec}
\usepackage{fancyhdr}
\usepackage[font=small,labelfont=normal,labelsep=colon,justification=justified,singlelinecheck=false,
            skip=6pt,position=bottom]{caption}
\usepackage[most]{tcolorbox}
\usepackage{enumitem}
\usepackage{aliascnt}
\usepackage{float}
\usepackage[colorlinks=true,linkcolor={red!72!black},citecolor={blue!45!black},urlcolor={blue!45!black},
            breaklinks=true]{hyperref}
\usepackage[capitalise,nameinlink]{cleveref}

\setlist[itemize]{leftmargin=1.4em,itemsep=1pt,topsep=2pt}

\titleformat{\section}{\fontsize{11}{13}\selectfont\scshape}{\thesection}{0.9em}{#1}
\titlespacing*{\section}{0pt}{17pt plus 3pt minus 2pt}{7pt plus 1pt}
\titleformat{\subsection}{\normalsize\scshape}{\thesubsection}{0.9em}{#1}
\titlespacing*{\subsection}{0pt}{13pt plus 2pt minus 2pt}{5pt plus 1pt}
\titleformat{\paragraph}[runin]{\bfseries}{}{0pt}{#1.}
\titlespacing*{\paragraph}{0pt}{3pt}{0.7em}
\renewcommand{\thesubsection}{\thesection.\arabic{subsection}}

\makeatletter
\newcommand{\defV}[2]{\expandafter\def\csname V@#1\endcsname{#2}}
\newcommand{\V}[1]{\ifcsname V@#1\endcsname\csname V@#1\endcsname\else\pending{#1}\fi}
\makeatother
\newcommand{\pending}[1]{\fcolorbox{red}{yellow!40}{\scriptsize\ttfamily PENDING:\,\detokenize{#1}}}
\input{numbers}

\makeatletter
\def\capsplit#1. #2\@nil{\textbf{#1.} #2}
\newcommand{\figwide}[4][!tb]{\begin{figure}[#1]\centering
  \IfFileExists{figures/#2.pdf}{\includegraphics[max width=\textwidth]{figures/#2}}{\pending{figure #2}}
  \caption{\capsplit #3 \@nil}\label{#4}\end{figure}}
\makeatother
\newcommand{\figcol}[4][t]{\figwide[#1]{#2}{#3}{#4}}
\newcommand{\Aref}[1]{\hyperref[#1]{Appendix~\ref*{#1}}}
\newcommand{\tab}[1]{\IfFileExists{tables/#1.tex}{\input{tables/#1}}{\pending{table #1}}}

\newtheoremstyle{flood}{0pt}{0pt}{\upshape}{}{\bfseries}{.}{0.5em}%
  {\thmname{#1}\thmnumber{ #2}\textnormal{\thmnote{ (#3)}}}
\theoremstyle{flood}

\newaliascnt{lemma}{theorem}
\newtheorem{lemma}[lemma]{Lemma}
\aliascntresetthe{lemma}
\newaliascnt{proposition}{theorem}
\newtheorem{proposition}[proposition]{Proposition}
\aliascntresetthe{proposition}
\newaliascnt{corollary}{theorem}

\aliascntresetthe{corollary}
\newaliascnt{definition}{theorem}

\aliascntresetthe{definition}
\tcbset{floodbox/.style={enhanced jigsaw,sharp corners,frame hidden,boxrule=0pt,colback=black!8,
  left=6pt,right=6pt,top=4pt,bottom=3pt,boxsep=0pt,before skip=9pt,after skip=9pt,
  grow to left by=4pt,grow to right by=4pt}}
\tcolorboxenvironment{theorem}{floodbox}
\tcolorboxenvironment{lemma}{floodbox}
\tcolorboxenvironment{proposition}{floodbox}
\tcolorboxenvironment{corollary}{floodbox}
\tcolorboxenvironment{definition}{floodbox}

\crefname{proposition}{Proposition}{Propositions}
\crefname{lemma}{Lemma}{Lemmas}
\crefname{appendix}{Appendix}{Appendices}
\crefname{figure}{Figure}{Figures}
\crefname{table}{Table}{Tables}
\crefname{section}{Section}{Sections}

\newtcolorbox{questionbox}{enhanced,arc=3.5pt,boxrule=0.5pt,colframe=black!25,colback=black!3,
  left=10pt,right=10pt,top=5pt,bottom=5pt,before skip=8pt,after skip=6pt,width=0.94\textwidth,center}

\renewenvironment{abstract}{%
  \begin{center}{\fontsize{11}{13}\selectfont\scshape Abstract}\end{center}\vspace{-6pt}%
  \list{}{\setlength{\leftmargin}{0.5in}\setlength{\rightmargin}{0.5in}\setlength{\listparindent}{0pt}\setlength{\parsep}{0pt}}%
  \item\relax}{\endlist}

\newcommand{\pp}{\,pp}

\begin{document}

{\raggedright\fontsize{15}{18}\selectfont\scshape
Allocation Before the Clock: Graph, Set and Game Theory of Professional League of Legends Objective Setups\par}
\vspace{6pt}
{\bfseries Zheng Gao\textsuperscript{$\dagger$}, Zhenyu Liu, Yifan Yang, Xiaoyu Li}\\
UNSW Sydney \quad {\small\textsuperscript{$\dagger$}Corresponding author:} {\small\texttt{zheng.gao1@unsw.edu.au}}
\vspace{2pt}

\input{sections/00_abstract}

\begin{questionbox}
\textbf{The question.}\ \emph{When a neutral objective is about to spawn on a clock that both teams can read, how does each team allocate its five players on a walled map, and what do teams actually price when they decide how many to commit?}
\end{questionbox}

\input{sections/01_intro}
\input{sections/02_data}
\input{sections/03_measurement}
\input{sections/04_model}
\input{sections/05_results}
\input{sections/06_related}
\input{sections/07_limitations}
\input{sections/08_conclusion}

\section*{Acknowledgements}
This research includes computations using the computational cluster Katana supported by Research Technology Services at UNSW Sydney \citep{katana}.

{\small
\setlength{\bibsep}{2.5pt plus 0.5pt}
\bibliographystyle{plainnat}
\bibliography{refs}
}

\clearpage
\appendix
\input{sections/A_appendix}

\end{document}

%% file: numbers.tex
\defV{ameCells}{48}
\defV{ameGoldFive}{+0.024}
\defV{ameGoldFiveCI}{[0.018, 0.033]}
\defV{ameGoldFour}{$-$0.015}
\defV{ameGoldFourCI}{[$-$0.026, $-$0.003]}
\defV{ameGoldZero}{$-$0.012}
\defV{ameGoldZeroCI}{[$-$0.020, $-$0.004]}
\defV{ameSigCount}{24}
\defV{ameSoulFour}{+0.030}
\defV{ameSoulFourCI}{[0.013, 0.045]}
\defV{auditPooled}{$-$4.1}
\defV{auditPooledCI}{[$-$8.2, 0.0]}
\defV{auditPooledP}{0.052}
\defV{auditTagnet}{$-$6.4}
\defV{auditTagnetCI}{[$-$11.2, $-$0.1]}
\defV{auditTagnetP}{0.046}
\defV{auditWithin}{$-$6.1}
\defV{auditWithinCI}{[$-$9.9, 0.5]}
\defV{auditWithinP}{0.078}
\defV{bestRfifteen}{6}
\defV{bestRtwentyfive}{6}
\defV{bestVseven}{6}
\defV{cleanCorr}{0.82}
\defV{clockEndDiffMed}{$-$4.2}
\defV{clockGames}{166}
\defV{clockLiveMedPct}{79.2}
\defV{clockMisreadMedPct}{0.06}
\defV{clockReplaySeg}{23}
\defV{delayAtakhan}{109}
\defV{delayBaronNashor}{117}
\defV{delayDragon}{49}
\defV{delayElder}{44}
\defV{delayHorde}{21}
\defV{delayRiftherald}{50}
\defV{fakerDpit}{$-$469}
\defV{fakerDpitCI}{[$-$709, $-$275]}
\defV{fakerN}{432}
\defV{fakerOthersN}{2,545}
\defV{fakerRank}{$-$0.12}
\defV{fakerRankCI}{[$-$0.23, $-$0.03]}
\defV{formExact}{0.538}
\defV{formExactFive}{0.524}
\defV{formRand}{0.849}
\defV{gapChannels}{+0.5}
\defV{gapChannelsAbs}{0.5}
\defV{gapModel}{$-$4.7}
\defV{gapModelCI}{[$-$6.7, $-$4.0]}
\defV{gapNone}{$-$5.1}
\defV{gapNoneCI}{[$-$6.4, $-$4.2]}
\defV{gapObs}{$-$19.2}
\defV{gapObsCI}{[$-$23.9, $-$15.1]}
\defV{gapShare}{0.24}
\defV{holdContests}{762}
\defV{hrBestBic}{5}
\defV{hrBestIcl}{5}
\defV{hrBestTest}{8}
\defV{hsBootKFive}{0.817}
\defV{hsBootKFiveN}{67}
\defV{hsBootKFiveQ}{0.735}
\defV{hsBootKSix}{0.819}
\defV{hsBootKSixN}{27}
\defV{hsBootKSixQ}{0.756}
\defV{hsCrossKFive}{0.851}
\defV{hsCrossKSix}{0.890}
\defV{hvBicK1}{556,150}
\defV{hvBicK2}{522,326}
\defV{hvBicK3}{502,918}
\defV{hvBicK4}{492,481}
\defV{hvBicK5}{485,443}
\defV{hvBicK6}{480,902}
\defV{hvBicK7}{478,926}
\defV{hvBicK8}{476,828}
\defV{hvStepFiveSix}{0.82}
\defV{hvStepSevenEight}{0.37}
\defV{hvStepSixSeven}{0.41}
\defV{hvTestK1}{$-$92.85}
\defV{hvTestK2}{$-$87.16}
\defV{hvTestK3}{$-$83.83}
\defV{hvTestK4}{$-$82.02}
\defV{hvTestK5}{$-$80.82}
\defV{hvTestK6}{$-$80.00}
\defV{hvTestK7}{$-$79.59}
\defV{hvTestK8}{$-$79.21}
\defV{iotaMaxEight}{1.16}
\defV{iotaMaxFive}{1.14}
\defV{iotaMaxSix}{1.16}
\defV{iotaMinEight}{0.71}
\defV{iotaMinFive}{0.67}
\defV{iotaMinSix}{0.65}
\defV{lamGenG}{3.7}
\defV{lamGenGCI}{[2.2, 7.3]}
\defV{lamObjTOne}{0.060}
\defV{lamObjTOneCI}{[0.032, 0.145]}
\defV{lamTOne}{3.3}
\defV{lamTOneCI}{[1.8, 6.3]}
\defV{latticeDiam}{4}
\defV{latticeEdges}{160}
\defV{latticeN}{52}
\defV{latticeObservedAll}{52}
\defV{latticeObservedAtSpawn}{52}
\defV{latticeShapes}{7}
\defV{ledgerSpawns}{51}
\defV{lkHasseCut}{0.330}
\defV{lkHasseLeaky}{0.331}
\defV{monoDiff}{0.027}
\defV{monoFolds}{4}
\defV{monoNats}{197}
\defV{monoSD}{0.050}
\defV{nGames}{164}
\defV{nSeries}{60}
\defV{nSetups}{2,994}
\defV{nSpawns}{1,516}
\defV{nTeams}{19}
\defV{nocostLL}{$-$9,254}
\defV{othersTL}{$-$0.192}
\defV{patFiveAtSpawn}{JMBS|T}
\defV{patFiveIota}{0.65}
\defV{patFiveTheta}{0.73}
\defV{patFiveWAtakhan}{0.26}
\defV{patFiveWEarly}{0.09}
\defV{patFiveWLate}{0.27}
\defV{patFourAtSpawn}{BS|T|J|M}
\defV{patFourIota}{1.16}
\defV{patFourTheta}{1.17}
\defV{patFourWAtakhan}{0.01}
\defV{patFourWEarly}{0.19}
\defV{patFourWLate}{0.01}
\defV{patOneAtSpawn}{BS|T|J|M}
\defV{patOneIota}{1.13}
\defV{patOneTheta}{1.28}
\defV{patOneWAtakhan}{0.01}
\defV{patOneWEarly}{0.25}
\defV{patOneWLate}{0.00}
\defV{patThreeAtSpawn}{JMBS|T}
\defV{patThreeIota}{0.72}
\defV{patThreeTheta}{0.56}
\defV{patThreeWAtakhan}{0.11}
\defV{patThreeWEarly}{0.10}
\defV{patThreeWLate}{0.38}
\defV{patTwoAtSpawn}{BS|T|J|M}
\defV{patTwoIota}{0.86}
\defV{patTwoTheta}{0.98}
\defV{patTwoWAtakhan}{0.29}
\defV{patTwoWEarly}{0.23}
\defV{patTwoWLate}{0.16}
\defV{patZeroAtSpawn}{BS|T|J|M}
\defV{patZeroIota}{0.90}
\defV{patZeroTheta}{1.00}
\defV{patZeroWAtakhan}{0.33}
\defV{patZeroWEarly}{0.14}
\defV{patZeroWLate}{0.18}
\defV{pilotDragonLead}{43}
\defV{pilotDragonLeadN}{21}
\defV{pilotDragonTrail}{13}
\defV{qContests}{1,454}
\defV{qConvN}{170}
\defV{qConvShapDiff}{0.2}
\defV{qConvShare}{57}
\defV{qKappa}{1.79}
\defV{qKappaCI}{[1.67, 1.93]}
\defV{qLL}{$-$8,343}
\defV{qLLsat}{$-$6,826}
\defV{qLam}{4.30}
\defV{qLamCI}{[3.01, 6.46]}
\defV{qLamObj}{0.076}
\defV{qLamOnePoint}{4.4}
\defV{qNu}{0.018}
\defV{qNuCI}{[0.013, 0.026]}
\defV{qNuConv}{0.018}
\defV{qNuNonconv}{0.019}
\defV{qOmegaK}{0.185}
\defV{qOmegaKCI}{[0.126, 0.239]}
\defV{qOmegaM}{0.69}
\defV{qOmegaMCI}{[0.64, 0.77]}
\defV{qPsi}{1.81}
\defV{qPsiCI}{[1.48, 2.17]}
\defV{qReps}{300}
\defV{qWinmax}{62.9}
\defV{qWinmaxCI}{[40.1, 92.8]}
\defV{qWinmaxLR}{120}
\defV{qWinmaxMin}{32.8}
\defV{qWinmaxPoint}{59.9}
\defV{rAtakhanEmptyHi}{22}
\defV{rAtakhanEmptyLo}{16}
\defV{rAtakhanQ}{0.08}
\defV{rBaronEmpty}{+19.7}
\defV{rBaronEmptyCI}{[11.4, 29.9]}
\defV{rBaronMeanK}{$-$0.39}
\defV{rBaronMeanKCI}{[$-$0.64, $-$0.14]}
\defV{rDragonEmptyAtSpawnMin}{95}
\defV{rDragonSixtyEmpty}{+9.1}
\defV{rDragonSixtyEmptyCI}{[$-$5.6, 22.8]}
\defV{rHeadlines}{3}
\defV{raceAtakhan}{0.51}
\defV{raceAtakhanN}{106}
\defV{raceBaron}{0.68}
\defV{raceBaronN}{80}
\defV{raceDragonOne}{0.81}
\defV{raceDragonOneN}{36}
\defV{raceDragonTwo}{0.80}
\defV{raceDragonTwoN}{264}
\defV{raceHerald}{0.77}
\defV{raceHeraldN}{62}
\defV{raceMedianT}{30}
\defV{sSevenBn}{$-$0.57}
\defV{sSevenBnCI}{[$-$0.84, $-$0.29]}
\defV{sSevenBnN}{2,683}
\defV{sSevenBnVfive}{$-$2.33}
\defV{sSevenBnVfiveCI}{[$-$2.70, $-$1.96]}
\defV{sSevenBnVfiveN}{3,877}
\defV{sSevenFiveThree}{0.037}
\defV{sSevenFiveThreeSq}{0.375}
\defV{sSevenTwoOne}{0.132}
\defV{sSevenTwoOneSq}{0.333}
\defV{setupsTeamMax}{394}
\defV{setupsTeamMin}{31}
\defV{shB}{$-$1.51}
\defV{shBCI}{[$-$3.42, $-$1.04]}
\defV{shC}{+1.50}
\defV{shCCI}{[1.14, 2.10]}
\defV{shCminusL}{+1.39}
\defV{shCminusLCI}{[1.02, 1.91]}
\defV{shF}{+0.30}
\defV{shFCI}{[0.21, 0.58]}
\defV{shL}{+0.15}
\defV{shLCI}{[0.05, 0.25]}
\defV{shMax}{0.35}
\defV{shMin}{$-$0.04}
\defV{shZeroIn}{5}
\defV{simpMatched}{+0.240}
\defV{simpMatchedCI}{[$-$0.354, 0.778]}
\defV{simpMatchedP}{0.464}
\defV{simpRaw}{$-$0.192}
\defV{simpRawCI}{[$-$0.237, $-$0.147]}
\defV{simpRawP}{0.000}
\defV{singleBot}{10.2}
\defV{singleBotCI}{[7.1, 13.7]}
\defV{singleJgl}{6.1}
\defV{singleJglCI}{[3.1, 9.5]}
\defV{singleMid}{29.7}
\defV{singleMidCI}{[24.1, 35.1]}
\defV{singleSup}{4.4}
\defV{singleSupCI}{[2.4, 6.5]}
\defV{singleTop}{49.5}
\defV{singleTopCI}{[44.1, 54.9]}
\defV{synthAri}{0.915}
\defV{synthAriMissing}{0.903}
\defV{synthIotaErr}{0.08}
\defV{synthLLspread}{0.0}
\defV{synthN}{600}
\defV{synthPathsOK}{3 of 3}
\defV{synthThetaErr}{0.10}
\defV{tOneFightRes}{$-$0.002}
\defV{tOneFightResCI}{[$-$0.045, 0.046]}
\defV{tOneFights}{399}
\defV{tOneMSI}{0.203}
\defV{tOneMinusOthers}{$-$0.002}
\defV{tOneMinusOthersCI}{[$-$0.133, 0.149]}
\defV{tOneNearest}{Gen.G}
\defV{tOneNearestP}{40}
\defV{tOneNearestTV}{0.032}
\defV{tOneSecond}{Top Esports}
\defV{tOneSetups}{394}
\defV{tOneTL}{$-$0.194}
\defV{tOneTLCI}{[$-$0.320, $-$0.064]}
\defV{tOneThird}{KT Rolster}
\defV{tOneUsage}{0.21/0.20/0.16/0.13/0.09/0.21}
\defV{tOneWorlds}{0.254}
\defV{teaserBlue}{T1}
\defV{teaserGame}{game 4 of the Worlds 2025 final}
\defV{teaserRed}{KT Rolster}
\defV{teaserWblue}{0.99}
\defV{tgBootAri}{0.813}
\defV{tgBootAriCtx}{0.725}
\defV{tgBootAriCtxQ}{0.537}
\defV{tgBootAriQ}{0.608}
\defV{tgCountAri}{0.086}
\defV{tgLeakRatio}{1.24}
\defV{tgN}{2,510}
\defV{tgSeedAri}{0.421}
\defV{tgSeedAriCtx}{0.319}
\defV{tgSilSix}{0.121}
\defV{tgVgold}{0.343}
\defV{tgVgoldCtx}{0.425}
\defV{thetaMaxEight}{1.87}
\defV{thetaMaxFive}{1.28}
\defV{thetaMaxSix}{1.28}
\defV{thetaMinEight}{0.66}
\defV{thetaMinFive}{0.68}
\defV{thetaMinSix}{0.56}
\defV{trade}{+0.073}
\defV{tradeCI}{[$-$0.026, 0.175]}
\defV{tradeP}{0.156}
\defV{trkCrowdFourLtSix}{0.493}
\defV{trkCrowdZeroLtSix}{0.905}
\defV{trkDetLtSix}{0.900}
\defV{trkDetMed}{146}
\defV{trkDetPninetyfive}{7,172}
\defV{trkDetShare}{78}
\defV{trkGameMedMax}{2,293}
\defV{trkGameMedMin}{65}
\defV{trkGamesBelowSix}{10}
\defV{trkHoldMed}{3,012}
\defV{trkInterpLtSix}{0.399}
\defV{trkInterpMed}{921}
\defV{trkInterpShare}{21}
\defV{trkLtSix}{0.785}
\defV{trkLtTen}{0.817}
\defV{trkLtThree}{0.697}
\defV{trkMed}{177}
\defV{trkN}{23,286}
\defV{trkPninety}{3,739}
\defV{trkPninetyfive}{7,582}
\defV{tulFiveFourGoldEq}{1.6}
\defV{tulGg}{10.9}
\defV{tulGgCI}{[9.2, 12.8]}
\defV{tulGgN}{1,481}
\defV{tulGn}{0.79}
\defV{tulGnCI}{[0.63, 0.95]}
\defV{tulGnN}{1,481}
\defV{tulGnVfive}{4.72}
\defV{tulGnVfiveCI}{[4.25, 5.27]}
\defV{tulGnVfiveN}{2,139}
\defV{tulRatio}{0.073}
\defV{tulRatioCI}{[0.054, 0.095]}
\defV{tulRatioN}{1,481}
\defV{tulRatioVfive}{0.419}
\defV{tulRatioVfiveCI}{[0.359, 0.499]}
\defV{tulRatioVfiveN}{2,139}
\defV{tulSide}{$-$0.029}
\defV{tulSideCI}{[$-$0.154, 0.093]}
\defV{tulSideN}{1,481}
\defV{vCAtakhanFortyfive}{+30.4}
\defV{vCAtakhanFortyfiveCI}{[15.0, 46.5]}
\defV{vCAtakhanFortyfiveMeanK}{$-$0.67}
\defV{vCAtakhanFortyfiveMeanKCI}{[$-$1.02, $-$0.32]}
\defV{vCBaronFortyfive}{+34.8}
\defV{vCBaronFortyfiveCI}{[18.7, 50.0]}
\defV{vCBaronSixty}{+21.3}
\defV{vCBaronSixtyCI}{[11.3, 32.3]}
\defV{vCDragonTwoFifteen}{+33.7}
\defV{vCDragonTwoFifteenCI}{[24.9, 41.5]}
\defV{vCDragonTwoFifteenSelf}{$-$10.9}
\defV{vCDragonTwoFifteenSelfCI}{[$-$15.7, $-$5.5]}
\defV{vCHeraldMinusSeventyfive}{+12.3}
\defV{vCHeraldMinusSeventyfiveCI}{[3.4, 21.0]}
\defV{vContrastCells}{300}
\defV{vGridSnaps}{21}
\defV{vHeadlineCells}{53}
\defV{vHeadlineCleanKept}{30}
\defV{vHeadlineCleanSign}{42}
\defV{vHeadlineTrailLess}{53}
\defV{vSnapMedPct}{90}
\defV{vSpawnsCovered}{1,497}
\defV{valATAKHANRes}{16.7}
\defV{valATAKHANWpa}{20.5}
\defV{valBARONNASHORRes}{28.5}
\defV{valBARONNASHORWpa}{32.6}
\defV{valDRAGONFourRes}{25.2}
\defV{valDRAGONFourWpa}{36.4}
\defV{valDRAGONOneRes}{8.1}
\defV{valDRAGONOneWpa}{6.0}
\defV{valDRAGONThreeRes}{20.3}
\defV{valDRAGONThreeWpa}{21.6}
\defV{valDRAGONTwoRes}{6.8}
\defV{valDRAGONTwoWpa}{8.5}
\defV{valELDERRes}{41.1}
\defV{valELDERWpa}{32.2}
\defV{valFIGHTRes}{8.1}
\defV{valFIGHTWpa}{8.7}
\defV{valHORDERes}{4.5}
\defV{valHORDEWpa}{4.0}
\defV{valRIFTHERALDRes}{4.9}
\defV{valRIFTHERALDWpa}{6.2}
\defV{valTOWERRes}{4.9}
\defV{valTOWERWpa}{12.7}
\defV{varObsGapRone}{$-$0.11}
\defV{varObsGapRtwo}{$-$0.24}
\defV{varShapMax}{2.4}
\defV{wCalMaxDevPP}{1.3}
\defV{wLoglossEarlyAll}{0.624}
\defV{wLoglossEarlyPooled}{0.625}
\defV{wLoglossEarlyR2509}{0.657}
\defV{wLoglossLateAll}{0.319}
\defV{wLoglossLatePooled}{0.317}
\defV{wLoglossLateR2509}{0.288}
\defV{wLoglossMidAll}{0.439}
\defV{wLoglossMidPooled}{0.439}
\defV{wLoglossMidR2509}{0.455}
\defV{xiEight}{0.221}
\defV{xiFive}{0.127}
\defV{xiSix}{0.137}

%% file: sections/00_abstract.tex
\begin{abstract}
Before every neutral objective in professional \emph{League of Legends}, each team decides which of its five players walk to the pit, which hold a lane, and when to commit.
The leading analysis of such setups, TAGNet for \emph{Counter-Strike~2}, clusters graph-network embeddings of player positions and reads cluster win rates as tactical quality.
We treat a setup as an allocation on a walled map: formations are set partitions on the lattice $\Pi_5$, control is a geodesic Voronoi split of a navigation graph, and commitment is an allocation game solved as a quantal response equilibrium priced in win probability.
On all \V{nGames} games of MSI and Worlds 2025 (\V{nSetups} team-setups, minimap tracks validated against Riot positions), trailing teams leave the pit empty far more often than leading teams, by \V{vCBaronFortyfive}\pp{} at the first baron and \V{vCAtakhanFortyfive}\pp{} at Atakhan.
Our model, HASSE-QRE, finds six setup patterns that whole-model series-bootstrap refits reproduce (ARI \V{hsBootKSix}, comparable to \V{tgBootAri} for TAGNet's $K$-Means step alone) and indicates that teams price fights, travel and committed players, not the objective itself (weight \V{qNu}; fixing it to one costs \V{qWinmaxPoint} log-likelihood points).
The equilibrium reproduces a quarter of the commitment gap, almost all of it through state variables rather than its strategic channels.
Weighting a team-level contrast by exposure moves it across $p=0.05$ (\V{auditTagnetP} against \V{auditPooledP}).
We release the tracks, setups and code.
\par\smallskip\noindent\textit{Keywords:} esports analytics, League of Legends, set partitions, quantal response equilibrium, Lanchester laws, Colonel Blotto, spatiotemporal tracking
\end{abstract}

%% file: sections/01_intro.tex
\figwide[H]{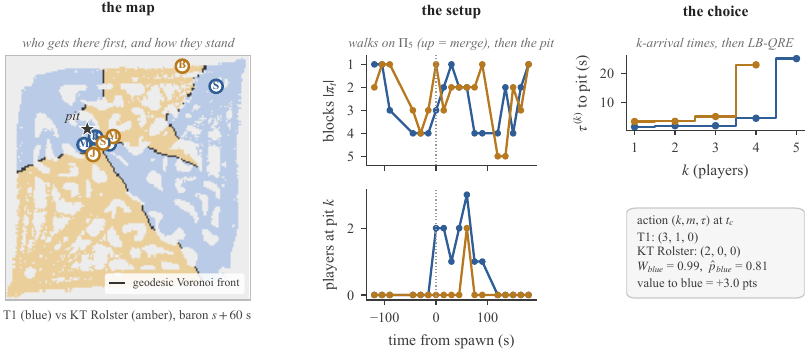}{The paper in one picture. One real setup (game~4 of the Worlds~2025 final, first baron, 60\,s after the spawn). \emph{The map:} walkable graph (white: walls) coloured by which team reaches each cell first, with each formation drawn as the blocks of a set partition. \emph{The setup:} each team's walk on the partition lattice $\Pi_5$ (up = fewer blocks) and its players at the pit. \emph{The choice:} $k$-arrival times $\tau^{(k)}$ and the LB-QRE action $(k,m,\tau)$ at the commitment time, with win probability $W$, fight probability $\hat p$ and the objective's value.}{fig:teaser}

\section{Introduction}
\label{sec:intro}

Neutral objectives in \emph{League of Legends} (LoL) appear on a public clock.
The first dragon spawns at 5:00, the void grubs at 8:00, the Rift Herald at 15:00, Atakhan at 20:00 and Baron Nashor at 25:00, and both teams know these times to the second.
In the minute before each spawn the two teams move their ten players into position: some walk to the pit, some keep pushing a lane, some wait in the jungle in between.
This setup decides who fights for the objective, with how many players, and whether the other side trades it for towers elsewhere.
Professional coaches plan it in detail. Esports analytics has measured almost none of it.

The closest work is TAGNet \citep{zhang2026tagnet}, which studies how defenders set up after the freeze time in professional \emph{Counter-Strike~2}.
TAGNet embeds the five defenders as a complete graph with a graph attention network \citep{velickovic2018graph} and an untrained recurrent layer, clusters the 64-dimensional embeddings with $K$-Means at a fixed $K=6$, and reads the win rate of each cluster as the quality of a tactic.
Three steps of that pipeline make the conclusions fragile.
The graph carries no map: straight-line distances ignore walls.
The context variables (economy, score) are fed into the representation and later `discovered' to be associated with it.
And comparisons of trailing and leading teams average over teams rather than over rounds.

We take a different view. A setup is an \emph{allocation on a walled map}.
Distances are geodesic on a navigation graph built from Riot's own position data, so a formation is the set of connected components of a proximity graph, that is, a set partition of the five roles, and the formations form the partition lattice $\Pi_5$ with \V{latticeN} elements.
Map control is the geodesic Voronoi split of that graph, and readiness is a vector of $k$-arrival times.
The decision is how many players to commit to the pit, how many to a second front, and whether to be there before the spawn.
Fight outcomes follow a Lanchester-type law \citep{lanchester1916aircraft} written as a Tullock contest \citep{tullock1980efficient}, the second front is a Blotto-type contest \citep{roberson2006colonel}, and every outcome is priced in win probability by a surface $W$ fitted on game results only.
\Cref{fig:teaser} shows all of these objects on one real baron setup from the Worlds~2025 final.

The first finding is about behaviour. Across all \V{nGames} games of MSI and Worlds~2025, trailing teams leave the pit empty far more often than leading teams once the objective is up.
At the first baron the gap reaches \V{vCBaronFortyfive}\pp{} \V{vCBaronFortyfiveCI} at $s{+}45$\,s, at Atakhan \V{vCAtakhanFortyfive}\pp{} \V{vCAtakhanFortyfiveCI}, and at respawned dragons \V{vCDragonTwoFifteen}\pp{} \V{vCDragonTwoFifteenCI} (\Cref{fig:curves,fig:contrasts}).
Every headline contrast keeps its sign at three pit radii and survives false-discovery control.
The same contrast on Riot's exact 60-second frames gives \V{rBaronEmpty}\pp{} \V{rBaronEmptyCI} at the first baron.
The dragon gradient of our kill-anchored pilot (\V{pilotDragonTrail}\% of trailing vs \V{pilotDragonLead}\% of leading non-takers contested) disappears once spawns rather than kills anchor the window: it was an artefact of selecting on the outcome.

The second finding is about structure. HASSE, a sticky hidden Markov model whose emissions live on $\Pi_5$, describes each setup as a walk on the Hasse diagram with a pattern-specific centre path, rigidity and inertia.
Held-out likelihood improves up to $K=8$ with steps that shrink to \V{hvStepSevenEight} nats per setup, and six patterns carry the structure (\Cref{fig:patterns}).
Two of them, used almost only at early river objectives, stay spread out; two gather four players around the pit at the spawn and dominate the late game.
Whole-model series-bootstrap refits of HASSE agree at ARI \V{hsBootKSix} (\V{hsBootKSixN} series-bootstrap resamples), comparable to re-running only TAGNet's $K$-Means step on fixed embeddings (\V{tgBootAri}); five independent HASSE fits, each the best of ten restarts, agree at \V{hsCrossKSix}, whereas single TAGNet runs on the same V-layer tracks agree at \V{tgSeedAri} across seeds and at \V{tgCountAri} with TAGNet's own counting baseline (\Cref{fig:stability}).

The third finding is about incentives. LB-QRE, a heterogeneous logit quantal response equilibrium \citep{mckelvey1995quantal} of the allocation game estimated by nested pseudo-likelihood \citep{aguirregabiria2007sequential}, finds that teams respond to fight values, travel and the cost of each committed player, and barely to the objective's own value: its weight is $\nu=\V{qNu}$ \V{qNuCI}, and fixing $\nu=1$ (pure win-probability maximisation) costs \V{qWinmaxPoint} log-likelihood points (likelihood ratio about \V{qWinmaxLR}; bootstrap mean \V{qWinmax} \V{qWinmaxCI}).
The equilibrium reproduces about a quarter of the observed trailing-minus-leading commitment gap (\V{gapModel} vs \V{gapObs}\pp), but almost all of that comes from state variables (alive counts, the gold term, travel and feasibility): with every strategic channel off the gap is \V{gapNone}\pp, so the four channels add \V{gapChannels}\pp{} net.
Within the model, the curvature of $W$ pushes trailing teams towards contesting (\V{shC}\pp) and team behaviour pulls them away (\V{shB}\pp), while the Lanchester and second-front channels are small (\Cref{fig:qre}).
Fights punish the smaller side hard: the larger side loses the next player far less often than the square law predicts (\V{sSevenFiveThree} vs \V{sSevenFiveThreeSq} at 5v3).

These results rest on a measurement layer that we validate end to end: a broadcast-clock reader aligns every video to game time, the minimap is registered to the navigation graph, and a per-game tracker is scored against Riot positions at held-out minutes (median error \V{trkMed} map units on a map 14,820 units wide).
In summary, we contribute
(i)~a theory-defined measurement of objective setups (navigation graph, geodesic Voronoi control, $k$-arrival order statistics, partitions on $\Pi_5$) with validated extraction from broadcast video;
(ii)~HASSE-QRE, a two-layer model whose discovery layer lives on the partition lattice and whose explanation layer is an estimable allocation game in win-probability units;
(iii)~the findings above, including the negative ones, with an estimator audit that shows how the choice of estimator can flip significance; and
(iv)~an open dataset of \V{nSpawns} spawns, \V{nSetups} team-setups and 2-fps tracks, with an appendix that turns narratives about T1 and Faker into measured quantities.

%% file: sections/02_data.tex
\section{Data and the Spawn Clock}
\label{sec:data}

\input{tables/tab_data}

\paragraph*{Games}
The core set is every game of the 2025 Mid-Season Invitational and the 2025 World Championship, \V{nGames} games in \V{nSeries} series between \V{nTeams} teams (\Cref{tab:data}).
Both events were played on patches 25.13 and 25.20 under the same objective rules, so one spawn clock covers all games.
For each game we hold three sources: the broadcast video, Riot's match-v5 timeline \citep{riotapi} with exact positions of all ten players every 60\,s and millisecond-stamped kills and objectives, and Leaguepedia \citep{leaguepedia} metadata (roles, draft, series).
An archive of 2,678 professional games with v5 timelines (2023--2026) is used only to fit outcome models: the win surface, event values and the navigation graph.
It never sees a setup choice.

\paragraph*{The spawn clock}
\Cref{tab:rules} lists the rules.
We infer every spawn time from the rules and the kill events and check that no kill precedes its inferred spawn; none does across the \V{nSpawns} spawns.
Anchoring at the spawn rather than at the kill is the central design choice of this paper.
The spawn time is fixed by rule and known to both teams, so a window anchored there does not depend on what the teams did; a window anchored at the kill selects on who took the objective and when.
After the spawn we keep a team-snapshot only while the objective is still alive (a risk set), because a snapshot taken after the kill describes the aftermath rather than the setup.

\input{tables/tab_rules}

\paragraph*{Units}
A \emph{team-setup} is one team at one spawn, \V{nSetups} in total.
We observe it on a grid of snapshots every 15\,s from $s{-}120$ to $s{+}180$\,s, where $s$ is the spawn time.
Two layers share these units.
The \textbf{V layer} uses 2-fps minimap tracks from the broadcast (\Cref{sec:measurement}) and covers \V{vSpawnsCovered} of \V{nSpawns} spawns (the rest lack a live track), \V{nSetups} team-setups, on a grid of \V{vGridSnaps} snapshots of which a median \V{vSnapMedPct}\% are observed; missing snapshots are treated as missing.
The \textbf{R layer} uses only Riot's exact 60-s frames and covers the fixed-time first spawns (dragon, grubs, herald, Atakhan, baron) at $s{-}60$, $s$ and $s{+}60$.
The R layer has no measurement error and coarse time; the V layer has fine time and validated measurement error.
Findings that matter are checked on both.

\paragraph*{Gold state}
A team is \emph{trailing}, \emph{even} or \emph{leading} by its gold difference at $s{-}120$\,s divided by the archive standard deviation of the gold difference at that game minute, with cut points $\pm 0.5$.
The state is fixed before the setup begins, so it cannot respond to the setup. It is measured at $s{-}120$\,s on the V layer, at $s{-}60$\,s on the R layer (the first exact frame) and, for LB-QRE, at the last Riot frame at least 30\,s before the spawn.

\paragraph*{Objective classes}
Mixture weights in the discovery layer are stratified by three classes that differ in timing and map: early river objectives (dragons, grubs, herald), Atakhan, and late objectives (baron, elder).
In the descriptive analysis respawned dragons and barons form their own groups because their spawn time depends on play.

%% file: tables/tab_data.tex
\begin{table}[t]
\centering
\small
\caption{\textbf{The core set covers every game of two 2025 events.} The core video set. Every game of MSI 2025 and Worlds 2025 under one set of objective rules.}
\label{tab:data}
\setlength{\tabcolsep}{4pt}
\begin{adjustbox}{max width=\linewidth}
\begin{tabular}{lrrrr}
\toprule
Event & Games & Series & Teams & Spawns \\
\midrule
MSI 2025 & 80 & 19 & 10 & 740 \\
Worlds 2025 Main Event & 80 & 40 & 16 & 740 \\
Worlds 2025 Play-In & 4 & 1 & 2 & 36 \\
\midrule
total & 164 & 60 & 19 & 1,516 \\
\bottomrule
\end{tabular}
\end{adjustbox}
\end{table}

%% file: tables/tab_rules.tex
\begin{table}[t]
\centering
\small
\caption{\textbf{Every objective has a fixed spawn clock, and no kill precedes its spawn.} The spawn clock (patch 25.09--25.20 rules) and the resulting units. Delay is the median spawn-to-kill time; the last column gives the share of tracked positions that were detected, interpolated or pinned to a V5 minute.}
\label{tab:rules}
\setlength{\tabcolsep}{4pt}
\begin{adjustbox}{max width=\linewidth}
\begin{tabular}{lllrrrrr}
\toprule
Objective & First spawn & Respawn & Spawns & Taken & Delay (s) & Setups & det/int/pin \\
\midrule
Elemental dragon & 5:00 & kill + 5:00 until a soul & 773 & 743 & 49 & 1,526 & 0.75/0.20/0.06 \\
Elder dragon & soul + 6:00 & kill + 6:00 & 27 & 17 & 44 & 52 & 0.67/0.30/0.00 \\
Void grubs & 8:00 & one camp of 3, no respawn & 164 & 164 & 21 & 322 & 0.71/0.10/0.19 \\
Rift Herald & 15:00 & once & 164 & 164 & 50 & 328 & 0.72/0.16/0.12 \\
Atakhan & 20:00 & once (side set by play) & 164 & 164 & 109 & 328 & 0.71/0.21/0.08 \\
Baron Nashor & 25:00 & kill + 6:00 & 224 & 181 & 117 & 438 & 0.69/0.27/0.03 \\
\bottomrule
\end{tabular}
\end{adjustbox}
\end{table}

%% file: sections/03_measurement.tex
\section{The Measurement Layer}
\label{sec:measurement}

The measurement layer turns video into positions and positions into the objects the models use.
It has no learned parameters beyond the per-game detectors, and every step is validated against Riot data.

\subsection{From broadcast to game coordinates}

\paragraph*{Clock alignment}
We crop the broadcast's game clock at 10\,fps and read it with a digit classifier trained on one game, labelled by a single hand-checked anchor reading and the clock's own progression.
Each clock tick yields one pair (video time, game time); a robust fit over the roughly one thousand ticks of a game gives the offset of each live segment and separates live play from replays, which show earlier times for a few seconds and then return.
All \V{clockGames} videos (the \V{nGames} core games and two control videos) align (\Cref{fig:measurement}a); the median share of frames carrying a live game time is \V{clockLiveMedPct}\%, the median misread rate \V{clockMisreadMedPct}\%, and the last live reading falls a median \V{clockEndDiffMed}\,s before the official game end, the length of the closing camera sequence.
\V{clockReplaySeg} replay segments are detected and excluded.

\paragraph*{Minimap registration}
The minimap is an orthographic projection, so four numbers (scale and offset per axis) map pixels to map coordinates.
The per-pixel median of a game's minimap frames removes the moving icons and leaves the static map; we register it to the walkable mask of the navigation graph by maximising their correlation.
Two independent crops of the same game agree to within 0.04 pixels, and Riot's tower coordinates and per-minute player positions land on their icons.

\paragraph*{Tracking}
Each game has ten known champions, and the v5 timeline gives every champion's exact position once a minute.
We therefore train detectors per game: for each champion a linear discriminant $w=\Sigma^{-1}(\mu_{\mathrm{champ}}-\mu_{\mathrm{bg}})$, where $\Sigma$ is the covariance of random minimap patches, so static structures such as towers fall into the background distribution without labels.
Scores are calibrated to likelihood ratios.
A Viterbi pass per champion chooses among detection peaks, a `held' state (icon occluded, position kept) and a `dead' state that is allowed only inside death windows from the kill events; transitions are priced by geodesic distance against the maximum movement speed.

\figwide{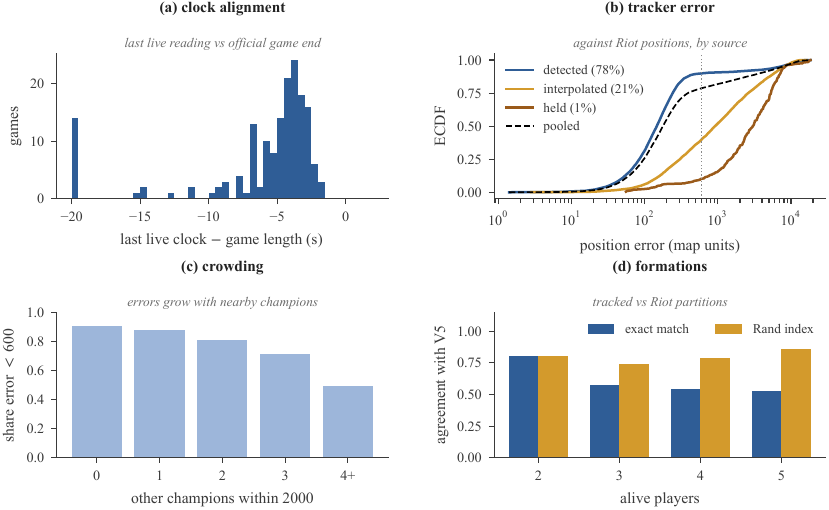}{The measurement layer is validated end to end. (a)~Last live clock reading minus official game length for all aligned videos. (b)~Tracker position error against Riot v5 at held-out odd minutes, by position source; the dotted line marks 600 units. (c)~Share of errors below 600 units by local crowding (other champions within 2,000 units). (d)~Agreement of tracked formations with v5 formations: exact partition match and Rand index, by number of alive players.}{fig:measurement}

\paragraph*{Validation}
We validate on a harder task than the production run: detectors are trained and tracks pinned only at even minutes, and the tracker is scored at odd minutes, a full 60\,s from the nearest pin (production pins every minute, at most 30\,s away).
Over \V{trkN} champion-minutes the median error is \V{trkMed} units and \V{trkLtSix} of errors are below 600 units (\Cref{tab:tracker}, \Cref{fig:measurement}).
Detected positions (\V{trkDetShare}\% of all) have median error \V{trkDetMed} and \V{trkDetLtSix} below 600; interpolated positions (\V{trkInterpShare}\%) have median \V{trkInterpMed}.
Error grows with crowding: the share below 600 falls from \V{trkCrowdZeroLtSix} with no other champion within 2,000 units to \V{trkCrowdFourLtSix} with four or more.
The upper tail (P95 \V{trkPninetyfive}) comes from identity errors in crowds, which is why formation agreement with v5 is \V{formExact} exact and \V{formRand} by Rand index.
The analyses below are built to be robust to this error: contrasts are headlines only if their sign agrees at three pit radii, every V-layer finding has an R-layer counterpart or a sensitivity run without mostly-interpolated snapshots, and the LB-QRE fit is repeated without decisions that rest on held or interpolated positions.

\subsection{Objects defined by theory}

\paragraph*{Navigation graph}
We rasterise the map into 100-unit cells and call a cell walkable if a player stood on it in at least three of 1,994 archive games on the current map (2024--2026, base terrain); changing the threshold to two or five games moves median geodesic distances to the pits by 0--65 units.
The walkable cells form a graph with 16-neighbourhood moves (8-neighbourhoods overestimate distances by up to 8\%), and we precompute all pairwise geodesic distances.
Every distance in this paper is geodesic.

\paragraph*{Formations are set partitions}
Two alive teammates are \emph{together} if their geodesic distance is at most $r=1{,}900$ units, five seconds of movement at the median speed.
The connected components of this proximity graph partition the alive roles; with all five alive the formation is an element of the partition lattice $\Pi_5$ ordered by refinement.
$\Pi_5$ has \V{latticeN} elements in \V{latticeShapes} shapes, its Hasse diagram has \V{latticeEdges} covering edges and diameter \V{latticeDiam} (\Cref{fig:lattice}); all \V{latticeObservedAll} partitions occur in our data.
Increasing $r$ only merges blocks, so the radius sensitivity is a chain in the lattice.
Dead players are removed by restricting the partition to the alive set, never imputed.

\figcol{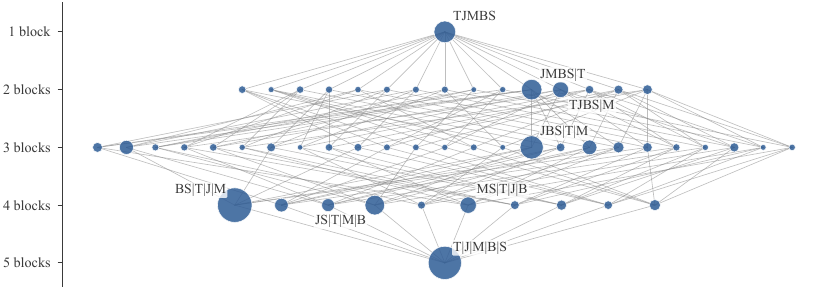}{The partition lattice $\Pi_5$ of the five roles (T top, J jungle, M mid, B bot, S support). Rows are numbers of blocks; edges are covering relations (one merge). Node area is the share of V-layer team-snapshots at the spawn with all five alive.}{fig:lattice}

\begin{lemma}[Lattice metric]\label{lem:metric}
For $\sigma,\tau\in\Pi_5$ let $d(\sigma,\tau)=|\sigma|+|\tau|-2|\sigma\vee\tau|$, where $|\cdot|$ counts blocks and $\vee$ is the join (finest common coarsening).
Then $d$ is a metric on $\Pi_5$, equals the shortest-path length in the Hasse diagram, and every shortest path from $\sigma$ to $\tau$ consists of exactly $|\sigma|-|\sigma\vee\tau|$ merges and $|\tau|-|\sigma\vee\tau|$ splits.
\end{lemma}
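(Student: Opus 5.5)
The plan is to identify $d(\cdot,\tau)$ with graph distance in the Hasse diagram by proving two inequalities. The only structure of $\Pi_5$ needed is this: a covering edge is a single merge of two blocks, so it changes the block count $|\cdot|$ by exactly one. The remaining claims, namely that $d$ is a metric and that shortest paths have the stated composition, will then follow formally.

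\emph{Upper bound.} Put $\rho=\sigma\vee\tau$. Since $\sigma$ refines $\rho$, each block of $\rho$ is a union of blocks of $\sigma$. Merging those blocks one pair at a time gives a chain of covering edges from $\sigma$ up to $\rho$ with exactly $|\sigma|-|\rho|$ merges. The same construction gives a chain from $\tau$ up to $\rho$ with $|\tau|-|\rho|$ merges. Read downwards, the second chain is $|\tau|-|\rho|$ splits from $\rho$ to $\tau$. Concatenating the two chains yields a walk of length $d(\sigma,\tau)$, so the graph distance is at most $d(\sigma,\tau)$.

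\emph{Lower bound.} Fix $\tau$ and let $\pi\lessdot\pi'$ be a covering edge, where $\pi'$ arises from $\pi$ by merging blocks $A$ and $B$. Every coarsening of $\pi'$ is a coarsening of $\pi$, so $\pi'\vee\tau$ is the finest partition coarser than both $\pi\vee\tau$ and the merge of $A$ and $B$. Hence $\pi'\vee\tau$ equals $\pi\vee\tau$ with the block containing $A$ merged into the block containing $B$. This is either no change (if they already share a block) or a single merge.

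Consequently $|\pi'\vee\tau|$ is either $|\pi\vee\tau|$ or $|\pi\vee\tau|-1$. Since also $|\pi'|=|\pi|-1$, we get $d(\pi',\tau)-d(\pi,\tau)\in\{-1,+1\}$. So $d(\cdot,\tau)$ changes by at most one along every Hasse edge. Since $d(\tau,\tau)=0$, any path from $\sigma$ to $\tau$ has length at least $d(\sigma,\tau)$. This semimodularity-type step is the only real content and the place to be careful. The key point is that the join of a one-merge coarsening with $\tau$ is again at most one merge above $\pi\vee\tau$, which the block description above makes explicit.

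\emph{Consequences.} Because $d$ is now a shortest-path distance in a connected graph, it is symmetric, satisfies the triangle inequality, and is nonnegative. Moreover $d(\sigma,\tau)=0$ forces $|\sigma|=|\sigma\vee\tau|=|\tau|$, and a refinement with equal block count is an equality, so $\sigma=\sigma\vee\tau=\tau$. Hence $d$ is a metric.

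For the composition of shortest paths, take any path with $a$ merges and $b$ splits. Then $a-b=|\sigma|-|\tau|$, since each merge lowers the block count by one and each split raises it by one. If the path is shortest, then also $a+b=d(\sigma,\tau)=|\sigma|+|\tau|-2|\sigma\vee\tau|$. Solving these two equations gives $a=|\sigma|-|\sigma\vee\tau|$ and $b=|\tau|-|\sigma\vee\tau|$, as claimed. None of the argument uses $n=5$, so it holds on every $\Pi_n$.
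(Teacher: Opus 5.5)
Your proof is correct, and its last step is the same as the paper's: solving $a-b=|\sigma|-|\tau|$ and $a+b=d(\sigma,\tau)$ for the numbers of merges and splits. The difference lies in the core claim that $d$ is the Hasse-diagram distance. The paper does not prove this claim. It observes that $\Pi_5$ is a geometric, hence upper semimodular, lattice with rank $r(\pi)=5-|\pi|$. It then quotes Monjardet's theorem that $2r(\sigma\vee\tau)-r(\sigma)-r(\tau)$ is a metric equal to the covering-graph distance in any upper semimodular lattice, substitutes the rank, and confirms all three statements by breadth-first search over the 52 partitions. You instead prove the needed fact directly. Your lower-bound step says that if $\pi'$ is obtained from $\pi$ by one merge, then $\pi'\vee\tau$ equals $\pi\vee\tau$ or is obtained from it by one merge. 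This is exactly the covering form of upper semimodularity, checked concretely through the block description of joins. You combine it with the potential argument, that $d(\cdot,\tau)$ changes by exactly $\pm1$ along each Hasse edge, and with the up-then-down chain through $\sigma\vee\tau$. Together these amount to the proof of the cited theorem specialised to partition lattices. Your route buys a self-contained argument that needs neither the geometric-lattice facts nor the exhaustive check, and it visibly works for every $\Pi_n$. The paper's route buys brevity and makes clear that nothing about partitions beyond semimodularity is used, so the same formula holds in any upper semimodular lattice of finite length.
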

The metric is classical for upper semimodular lattices \citep{monjardet1981metrics}; we also verify all three statements exhaustively on $\Pi_5$ (\Aref{app:proofs}).
It lets HASSE count merges and splits in Hasse steps, a unit that a reader can interpret.

\paragraph*{Control and readiness}
For a snapshot, each walkable cell belongs to the team whose nearest alive player reaches it first (a geodesic Voronoi split, left panel of \Cref{fig:teaser}); \emph{pit control} is the team's share of cells within the pit radius.
The \emph{$k$-arrival time} $\tau^{(k)}$ is the $k$-th order statistic of the team's geodesic travel times to the pit (right panel of \Cref{fig:teaser}): the earliest moment at which the team can have $k$ players there.
$k$ is the number of alive players within $R=1{,}500$ units of the pit centre (sensitivity $R\in\{1{,}000,2{,}000\}$).
Each player's \emph{zone} is one of pit, own pit side, opposite pit side, pit lane, mid, far side and base, defined relative to the pit and the team.

%% file: sections/04_model.tex
\section{HASSE-QRE}
\label{sec:model}

HASSE-QRE has two layers with one rule between them.
The discovery layer HASSE describes \emph{what} setups look like and how they unfold; the explanation layer LB-QRE describes \emph{why} teams commit the players they do.
Context variables (gold, side, soul point, team strength) never enter HASSE: they are related to its output only afterwards, a modular `cut' \citep{plummer2015cuts}.
The association between patterns and context therefore cannot be built into the patterns.

\subsection{HASSE: walks on the partition lattice}

A team-setup $e$ is a sequence of observations $y_{et}=(A_t,\pi_t,\{g_B\}_{B\in\pi_t})$ over $T=13$ snapshots ($s{-}120$ to $s{+}60$\,s): the alive set $A_t$, the formation $\pi_t\in\Pi(A_t)$, and the zone $g_B$ of each block.
A pattern $c$ has a \emph{centre path} $\mu_{c,1:T}$ in $\Pi_5$, a rigidity $\theta_c$, an inertia $\iota_c$ and zone preferences $\psi_c$.
Its formation emission is
\begin{equation}
\begin{split}
P_c(\pi_t=\sigma\mid\pi_{t-1})\propto\exp\{&-\theta_c\, d(\sigma,\mu_{c,t}|_{A_t})\\
&-\iota_c\, d(\sigma|_C,\pi_{t-1}|_C)\},
\end{split}
\label{eq:hasse}
\end{equation}
with $d$ the lattice metric of \Cref{lem:metric}, $\cdot|_A$ restriction to the alive set and $C=A_t\cap A_{t-1}$.
The normaliser is an exact sum over at most \V{latticeN} partitions.
$\theta_c$ says how tightly the pattern holds its centre and $\iota_c$ how reluctant it is to merge or split between snapshots; both are measured per Hasse step.
Each block chooses a zone by a product of role experts, $P(g_B=h)\propto\exp\sum_{r\in B}\psi_c[r,h]$, with a linear time trend.
Patterns follow a sticky Markov chain \citep{fox2011sticky}: $z_1\sim\mathrm{Cat}(w_{\mathrm{class}(e)})$ and $P(z_t=c'\mid z_{t-1}=c)=(1-\xi)\mathbb{1}[c'=c]+\xi q_{cc'}$; $\xi=0$ recovers TAGNet's one-label-per-setup clustering.
Snapshots after the kill are missing, not zero.
We fit by EM with exact forward--backward, an exact Viterbi over the 52 centres for each $\mu_c$, and a concave two-dimensional update for $(\theta_c,\iota_c)$ (\Aref{app:hasse}).
We examine $K=1,\dots,8$ by held-out log-likelihood over five series-grouped folds, BIC and ICL \citep{biernacki2000assessing}; no finite $K$ is selected (\Cref{sec:patterns}), so we report $K=6$ and show $K=5$ and $K=8$.
The restricted model $H_{\mathrm{mono}}$ (run at $K=5$ with five restarts) allows the centre path only to merge, i.e.\ setups that only gather.

\subsection{LB-QRE: the allocation game}

\paragraph*{Win surface and values}
$W_A(x)$ is the probability that team $A$ wins from state $x$. We fit $\mathrm{logit}\,W_A(x)=\beta(t)^\top[\phi(x_A)-\phi(x_B)]+\beta_{\mathrm{side}}(t)s_A$ with $s_A=\pm1$, which makes $W_B=1-W_A$ exact. $\phi$ contains gold (hinge basis, so curvature in gold is free), experience, kills, towers, inhibitors, plates, dragons, soul and soul point, grubs, heralds, Atakhans, barons, baron and elder buffs and alive players; $\beta(t)$ varies with the game minute through a B-spline basis, and objective coefficients are specific to the rule set. The fit uses 2,678 archive games, an $L_2$ penalty chosen by series-grouped cross-validation and out-of-fold predictions throughout.
$W$ is calibrated to within \V{wCalMaxDevPP}\pp{} in every decile (\Cref{fig:values}a).
Evaluated 15\,s after the kill, $W$ assigns the baron a \emph{negative} value: $W$ conditions on gold, towers and buffs, and right after the kill none of what the baron will produce has happened yet.
We therefore value every event by the win probability it adds over the next 180\,s, $\mathrm{logit}\,W_{t+180}=a\,\mathrm{logit}\,W_t+\beta(t)\,\mathrm{dir}$, fitted on the archive (\Cref{tab:values}).
At $W=0.5$ a first baron is worth \V{valBARONNASHORWpa} win-probability points, Atakhan \V{valATAKHANWpa}, a soul-point dragon \V{valDRAGONThreeWpa} and the first dragon \V{valDRAGONOneWpa}; a won fight is worth \V{valFIGHTWpa} points per kill.
Total effects on the final result agree (\Cref{fig:values}c).

\figwide{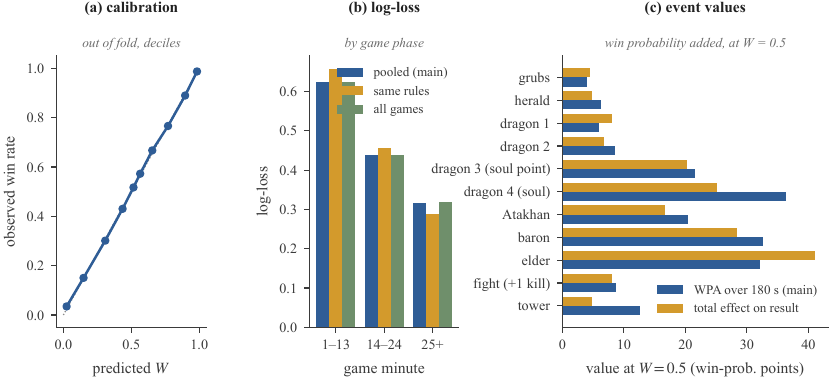}{Win probability prices every outcome. (a)~Calibration of the outcome-only win surface $W$ (out of fold, deciles). (b)~Log-loss by game phase for three specifications. (c)~Event values at $W=0.5$: win probability added over 180\,s (main) and the total effect on the result.}{fig:values}

\paragraph*{Actions}
At the commitment time $t_c$, the first at-risk snapshot in $[s,s{+}180]$ at which some team has at least three players at the pit (otherwise the last at-risk snapshot), team $i$ with $n_i$ alive players has chosen $a_i=(k_i,m_i,\tau_i)$: $k_i$ players at the pit, $m_i$ on a second front (zone `far'), and $\tau_i=1$ if the pit group was already in place at the spawn.
With five alive there are 36 actions; players who cannot reach the pit within 60\,s are infeasible for $k$.

\paragraph*{Outcomes}
If both teams commit, team $i$ wins the fight with
\begin{equation}
p_i=\Lambda\big(\gamma_n\ln(k_i/k_j)+\gamma_g\ln(\bar g_i/\bar g_j)\big),
\label{eq:tullock}
\end{equation}
the logit form of a Tullock contest \citep{tullock1980efficient} whose numbers term generalises Lanchester's laws: under the square law with per-player strength proportional to $\bar g^{\beta}$, $\gamma_n/\gamma_g=2/\beta$; under the linear law, $1/\beta$.
The fight winner takes the objective; a lone committer takes it uncontested.
The second front is a Blotto-type contest \citep{roberson2006colonel}, won with probability $m_i^{\gamma_f}/(m_i^{\gamma_f}+m_j^{\gamma_f})$ and worth a tower.
$\gamma_n$ and $\gamma_g$ are estimated from fights in the core games (\Cref{sec:fights}), outside the choice model.

\paragraph*{Utilities and equilibrium}
Team $i$'s expected payoff splits into the objective part $U^{\mathrm{obj}}_i$ and the rest (fight and front) $U^{\mathrm{rest}}_i$, both in win-probability points.
Choices follow a heterogeneous logit quantal response equilibrium \citep{mckelvey1995quantal}:
\begin{equation}
\begin{split}
\log\sigma_i(a)={}&\lambda_i[\nu_i U^{\mathrm{obj}}_i(a,\sigma_j)+U^{\mathrm{rest}}_i(a,\sigma_j)]\\
&-\kappa_i\tau(a)-\psi_i T_k(a)-\omega_{k,i}k-\omega_{m,i}m+\mathrm{const},
\end{split}
\label{eq:qre}
\end{equation}
where $\lambda_i$ is decision precision ($\lambda\to\infty$ nests Nash), $\nu_i$ the weight on the objective's own value ($\nu=1$: the team maximises win probability), $\kappa_i$ the cost of early positioning, $T_k$ the travel minutes of the $k$ nearest players, and $\omega$ the per-player costs of committing to the pit or the second front.
Team parameters are shrunk to global values.

\begin{proposition}[Contest threshold]\label{prop:threshold}
Suppose team $i$ either contests, reaching win probability $W(\eta+G)$ with probability $p$ and $W(\eta-L)$ otherwise, or concedes and trades, reaching $W(\eta-b+c)$. With logit choice at precision $\lambda$,
\[
P(\text{contest}\mid\eta)=\Lambda\big(\lambda S(\eta)[p(\eta)-\pi^*(\eta)]\big),
\]
\[
\pi^*(\eta)=\frac{W(\eta-b+c)-W(\eta-L)}{W(\eta+G)-W(\eta-L)},
\]
with stake $S(\eta)=W(\eta+G)-W(\eta-L)$.
\end{proposition}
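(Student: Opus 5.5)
The plan is a direct computation from binary logit choice, with no fixed-point argument. The statement fixes $p=p(\eta)$, so the opponent's strategy enters only through $p$. The first step is to write the two expected payoffs in win-probability units. Contesting gives
\[
U_{\mathrm{c}}(\eta)=p\,W(\eta+G)+(1-p)\,W(\eta-L),
\]
and conceding and trading gives $U_{\mathrm{t}}(\eta)=W(\eta-b+c)$. Under logit choice at precision $\lambda$ with only these two actions, we have $P(\text{contest}\mid\eta)=e^{\lambda U_{\mathrm c}}/(e^{\lambda U_{\mathrm c}}+e^{\lambda U_{\mathrm t}})$. This equals $\Lambda\big(\lambda[U_{\mathrm c}-U_{\mathrm t}]\big)$, where $\Lambda$ is the logistic function already used in \eqref{eq:tullock}. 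Any additive constant common to both actions cancels. This is the two-action specialisation of \eqref{eq:qre} with $\nu_i=1$ and the cost terms absorbed into the payoffs or set to zero.

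The second step is algebra on the utility difference. Rewrite $U_{\mathrm c}=W(\eta-L)+p\,[W(\eta+G)-W(\eta-L)]=W(\eta-L)+p\,S(\eta)$. Then
\[
U_{\mathrm c}-U_{\mathrm t}=p\,S(\eta)-\big[W(\eta-b+c)-W(\eta-L)\big].
\]
The third step is to factor out the stake, which gives $S(\eta)\big[p-\pi^*(\eta)\big]$ with $\pi^*$ exactly as stated. Substituting into $\Lambda(\lambda\,\cdot)$ yields the claim.

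The only delicate point is the division by $S(\eta)$. It needs $S(\eta)\neq 0$, which I would justify by assuming $W$ is strictly increasing in the state index $\eta$ and $G+L>0$; then $S(\eta)>0$. This monotonicity is implicit in the paper's use of $W$ as a logit of a linear index. If $S=0$ the identity should instead be read in the unfactored form $\Lambda(\lambda[pS-(W(\eta-b+c)-W(\eta-L))])$, and I would note this as a degenerate case. I would also remark that $\pi^*$ need not lie in $[0,1]$: if trading beats winning the fight, contesting is dominated. The formula remains valid in that case, since the derivation never uses $\pi^*\in[0,1]$.
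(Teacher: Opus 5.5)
Your proposal is correct and follows the paper's proof: both compute the two expected win probabilities, rewrite their difference as $W(\eta-L)+pS-W(\eta-b+c)=S(p-\pi^*)$, and apply binary logit choice $\Lambda(\lambda\cdot\text{difference})$. Your remark that the factorisation needs $S(\eta)\neq 0$ (for example, $W$ strictly increasing and $G+L>0$) makes explicit a condition the paper leaves implicit in its definition of $\pi^*$.
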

The proposition separates the channels that the decomposition below measures.
A trailing team has a lower fight probability $p$ (Lanchester), but where $W$ is convex its break-even $\pi^*$ falls (curvature), and a valuable second front raises $\pi^*$ (Blotto).

\begin{proposition}[Uniqueness]\label{prop:unique}
If $\nu_A=\nu_B=1$ and costs depend only on a team's own action, the game is zero-sum in win probability up to own-action terms, and the logit QRE is the unique saddle point of
\[
F(x,y)=x^\top Uy-\tfrac{c_A^\top x-H(x)}{\lambda_A}+\tfrac{c_B^\top y-H(y)}{\lambda_B}
\]
over the two simplices, with $H$ the Shannon entropy.
\end{proposition}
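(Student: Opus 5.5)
The plan has three steps: reduce the game to zero-sum form, identify the QRE conditions with the first-order conditions of $F$, and use strict concavity–convexity of $F$ for existence and uniqueness.

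\emph{Step 1: zero-sum structure.} With $\nu_A=\nu_B=1$, the bracket in \eqref{eq:qre} is $U^{\mathrm{obj}}_i+U^{\mathrm{rest}}_i$. By construction this is team $i$'s expected change in win probability. Since $W_B=1-W_A$ holds exactly, the expected win probabilities of the two teams sum to one for every action pair. Hence the payoff matrices satisfy $U_B(b,a)=1-U_A(a,b)$. Write $U=U_A$ for the matrix indexed by (row $a$, column $b$). The constant $1$ and any terms depending only on a team's own action are absorbed into that team's cost vector $c_i$. Here $c_i$ collects $\kappa_i\tau(a)+\psi_iT_k(a)+\omega_{k,i}k+\omega_{m,i}m$, scaled by $1/\lambda_i$ where needed. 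So team $A$ maximises $x^\top Uy-c_A^\top x$ and team $B$ minimises $x^\top Uy+c_B^\top y$, each with logit noise.

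\emph{Step 2: QRE equals the first-order conditions of $F$.} Fix $y$ and maximise $F(\cdot,y)$ over the simplex. This objective is $x^\top Uy-c_A^\top x/\lambda_A+H(x)/\lambda_A$, up to terms not involving $x$. Attach a Lagrange multiplier to $\sum x_a=1$. Stationarity gives $\lambda_A(Uy)_a-c_{A,a}-\log x_a-1=\mathrm{const}$, so
\[
x_a\propto\exp\{\lambda_A(Uy)_a-c_{A,a}\}.
\]
This is exactly the logit response in \eqref{eq:qre}. Because $-\log x_a\to\infty$ at the boundary, the maximiser is interior and the multiplier argument is valid. Symmetrically, minimising $F(x,\cdot)$ gives $y_b\propto\exp\{-\lambda_B(x^\top U)_b-c_{B,b}\}$. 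This is $B$'s logit response to the payoff $1-x^\top Ue_b$ minus its own costs. Hence a pair $(x,y)$ is a logit QRE if and only if $x$ is a best response in $F$ to $y$ and $y$ is a best response to $x$, i.e.\ if and only if $(x,y)$ is a saddle point of $F$.

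\emph{Step 3: existence and uniqueness.} $F$ is bilinear plus $H(x)/\lambda_A$, which is strictly concave in $x$, and minus $H(y)/\lambda_B$, which is strictly convex in $y$. So $F$ is continuous and strictly concave–convex on a product of compact convex sets. Sion's minimax theorem (or von Neumann's, extended to concave–convex functions) gives a saddle point. For uniqueness, suppose $(x_1,y_1)$ and $(x_2,y_2)$ are both saddle points. The standard argument shows $(x_1,y_2)$ is also a saddle point, since all saddle values coincide. Then $x_1$ and $x_2$ both maximise the strictly concave $F(\cdot,y_2)$, so $x_1=x_2$, and likewise $y_1=y_2$.

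The main obstacle is Step 1: making precise that $U^{\mathrm{obj}}+U^{\mathrm{rest}}$ are the two halves of a single zero-sum win-probability lottery. This requires that both teams' outcomes are evaluated at the same post-outcome states, and that every term not of that form depends only on the team's own action. I would verify this directly from the outcome model: the fight probabilities in \eqref{eq:tullock} satisfy $p_i+p_j=1$, the Blotto front probabilities also sum to one, and the lone-committer case uses a shared state. With $\nu\ne1$, the objective part is reweighted asymmetrically and the sum is no longer constant, which is why the hypothesis is needed.
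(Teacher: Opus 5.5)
Your proposal is correct and follows the paper's own proof. It uses the same zero-sum reduction from $W_B=1-W_A$, the same identification of each team's logit response with the first-order condition of the entropy-regularised $F$, and the same existence and uniqueness argument from strict concavity--convexity on compact simplices via the minimax theorem. Your additions (interiority of the maximiser, the interchangeability argument for uniqueness, and the check that the fight and front probabilities sum to one) spell out what the paper asserts in a line; the one slip is that $c_i$ should be the unscaled cost vector of \eqref{eq:qre}, not scaled by $1/\lambda_i$, which is what your Step~2 computation actually uses.
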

Proofs are in \Aref{app:proofs}.
Under the estimated $\nu\neq1$ the game is no longer zero-sum; we solve each state by damped fixed-point iteration from uniform strategies.

\paragraph*{Estimation and tests}
We estimate $\theta_i=(\lambda_i,\nu_i,\kappa_i,\psi_i,\omega_{k,i},\omega_{m,i})$ by nested pseudo-likelihood \citep{aguirregabiria2007sequential} and compute intervals from 300 series-block bootstrap replicates \citep{kunsch1989jackknife}.
Empirical content comes from fixing payoffs at outcome-only values \citep{haile2008empirical}; we compare the fit with a saturated multinomial benchmark.
Three named tests follow from \Cref{prop:threshold}: $H_{\mathrm{Simpson}}$ (matched on $\hat p$ and stake, trailing teams contest no less than leading teams), $H_{\mathrm{trade}}$ (among non-committers, second-front use rises as $\hat p$ falls) and $H_{\mathrm{winmax}}$ ($\nu=1$).

\paragraph*{Four-channel decomposition}
Let $G(S)$ be the equilibrium gap $P(k\geq3\mid\text{trailing})-P(k\geq3\mid\text{leading})$ at the observed states when only the channels in $S\subseteq\{L,C,F,B\}$ are on: $L$ (Lanchester; off: $\gamma_n=0$), $C$ (curvature; off: values taken at $W=0.5$), $F$ (second front; off: front values zero) and $B$ (team behaviour; off: pooled parameters).
We solve all 16 subsets and allocate $G(\{L,C,F,B\})-G(\varnothing)$ by Shapley values \citep{shapley1953value}.
Convexity-driven risk taking by trailing sides is known in isolation \citep{cabral2003rd,skinner2011scoring}; the decomposition puts it in one model with the fight law and the second front and measures each in the same unit.

%% file: sections/05_results.tex
\section{Results}
\label{sec:results}

Each subsection answers one question. Intervals are 95\% series-block bootstrap intervals (1,000 resamples unless stated; 500 for the context model); multiple contrasts are controlled by Benjamini--Hochberg \citep{benjamini1995controlling}.

\subsection{Trailing teams leave the pit empty}
\label{sec:empty}

\figwide{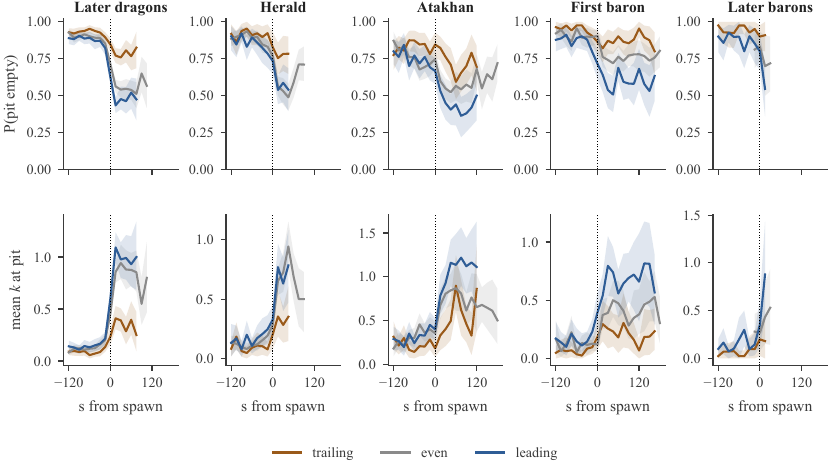}{Trailing teams stay away once the objective is up. Share of team-snapshots with nobody at the pit (top) and mean number of players at the pit (bottom), by gold state, from 120\,s before to 180\,s after the spawn (V layer, risk set, pit radius 1,500; bands: 95\% intervals; cells with fewer than 30 setups omitted).}{fig:curves}

Before the spawn the three gold states look alike: at every objective most teams have nobody at the pit, and the curves overlap (\Cref{fig:curves}).
They separate after the spawn.
Leading and even teams move players in, trailing teams much less so.
\Cref{fig:contrasts} maps the trailing-minus-leading contrast over objectives and time.
\V{vHeadlineCells} of \V{vContrastCells} cells at the primary radius are headlines (same sign at radii 1,000, 1,500 and 2,000; $q_{\mathrm{BH}}<0.05$), and \V{vHeadlineTrailLess} of them point the same way: more empty pits and fewer players for trailing teams (\Cref{tab:vlayer}).
At the first baron the share of empty pits is \V{vCBaronFortyfive}\pp{} \V{vCBaronFortyfiveCI} higher for trailing teams at $s{+}45$; at Atakhan \V{vCAtakhanFortyfive}\pp{} \V{vCAtakhanFortyfiveCI}, with \V{vCAtakhanFortyfiveMeanK} \V{vCAtakhanFortyfiveMeanKCI} fewer players; at respawned dragons \V{vCDragonTwoFifteen}\pp{} \V{vCDragonTwoFifteenCI} at $s{+}15$, where trailing teams also put three players at the pit \V{vCDragonTwoFifteenSelf}\pp{} \V{vCDragonTwoFifteenSelfCI} less often.
The only pre-spawn headline is at the herald, where trailing teams leave the pit empty \V{vCHeraldMinusSeventyfive}\pp{} \V{vCHeraldMinusSeventyfiveCI} more often at $s{-}75$.

\figwide{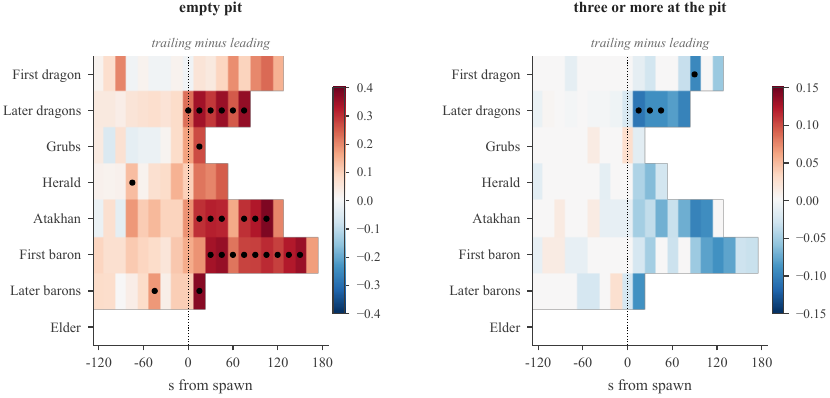}{The contrast is post-spawn and shared across objectives. Trailing-minus-leading difference in the share of empty pits (left) and of three or more players at the pit (right), by objective group and time (V layer, radius 1,500). Dots mark headline cells: same sign at all three pit radii and $q_{\mathrm{BH}}<0.05$.}{fig:contrasts}

The contrast does not come from tracking error.
Dropping every snapshot in which more than half of the positions are held or interpolated keeps \V{vHeadlineCleanKept} of the \V{vHeadlineCells} headline cells as headlines and the sign of \V{vHeadlineCleanSign} of the cells that remain estimable; the two sets of contrasts correlate at \V{cleanCorr} (Appendix \Cref{fig:cleanfull}).
On Riot's exact frames (R layer), the first baron at $s{+}60$ shows the same pattern: \V{rBaronEmpty}\pp{} \V{rBaronEmptyCI} more empty pits and \V{rBaronMeanK} \V{rBaronMeanKCI} players (\Cref{tab:rlayer}, \Cref{fig:rlayer}a).
Atakhan points the same way on the R layer (\V{rAtakhanEmptyLo}--\V{rAtakhanEmptyHi}\pp{} across radii) with $q=\V{rAtakhanQ}$; the 60-s frames are too coarse to reach the headline bar there.

\subsection{The estimator decides what is significant}
\label{sec:audit}

\figwide{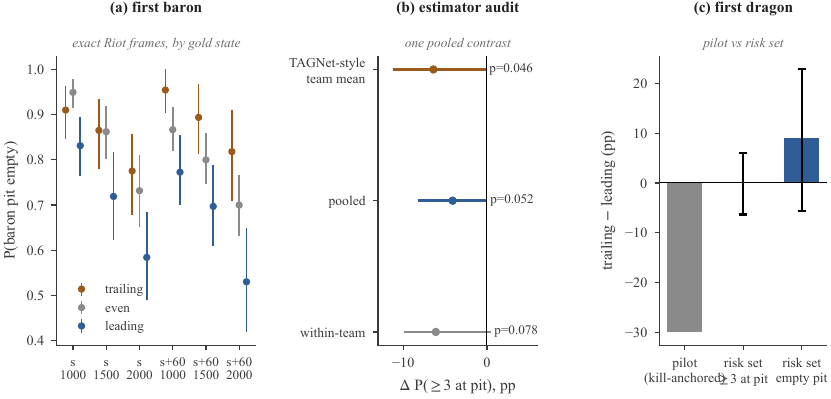}{Exact Riot frames confirm the baron contrast and expose two estimator artefacts. (a)~Share of empty baron pits by gold state at the spawn and 60\,s later, for three pit radii. (b)~One pooled contrast (share of teams with three or more at the pit at $s{+}60$, all first spawns) under three estimators; only the unweighted team-mean difference crosses $p<0.05$ (\V{auditTagnetP} against \V{auditPooledP}; both within bootstrap noise of the line). (c)~The first-dragon gradient of the kill-anchored pilot against the spawn-anchored risk-set contrasts.}{fig:rlayer}

The same data support different conclusions depending on the estimator (\Cref{fig:rlayer}b).
Pooling the first spawns, the trailing-minus-leading difference in committing three players at $s{+}60$ is \V{auditTagnet}\pp{} \V{auditTagnetCI} with $p=\V{auditTagnetP}$ as a difference of unweighted team means (team-level averaging in the spirit of TAGNet's paired team-level test).
Weighting by exposure gives \V{auditPooled}\pp{} ($p=\V{auditPooledP}$), and the within-team contrast over teams observed in both states \V{auditWithin}\pp{} ($p=\V{auditWithinP}$).
The point estimates agree; the $p$-value crosses 0.05 with the estimator, although \V{auditTagnetP} against \V{auditPooledP} lies within the bootstrap Monte-Carlo noise of about $\pm0.007$.

Anchoring matters more.
In our kill-anchored pilot on Worlds~2025, \V{pilotDragonTrail}\% of trailing and \V{pilotDragonLead}\% of leading teams that did not take the first dragon had contested it (only \V{pilotDragonLeadN} leading cases).
Anchored at the spawn, with the risk set, the first-dragon contrast at $s{+}60$ is \V{rDragonSixtyEmpty}\pp{} \V{rDragonSixtyEmptyCI} for empty pits, and committing three players is equally rare in both states (\Cref{fig:rlayer}c).
At the first dragon both teams are still laning: at least \V{rDragonEmptyAtSpawnMin}\% of pits are empty at the spawn in every gold state.
The pilot gradient was produced by selecting on who took the objective.

\subsection{Arriving first decides most river objectives}
\label{sec:race}

For each spawn we record which team first has three players within 1,500 units of the pit.
That team takes the objective within 180\,s in \V{raceDragonTwo} of respawned dragons ($n=\V{raceDragonTwoN}$), \V{raceHerald} of heralds and \V{raceBaron} of first barons, but in only \V{raceAtakhan} of Atakhans ($n=\V{raceAtakhanN}$; \Cref{fig:arrival}).
The median first arrival is \V{raceMedianT}\,s after the spawn: setups are decided after the clock, not before it.

\subsection{Six patterns, reproducible}
\label{sec:patterns}

No finite $K\leq8$ is selected by any criterion on the V layer: held-out log-likelihood per setup rises from \V{hvTestK1}\,(K=1) to \V{hvTestK6} at $K=6$ and \V{hvTestK8} at $K=8$, with steps of \V{hvStepFiveSix}, \V{hvStepSixSeven} and \V{hvStepSevenEight} nats from $K=5$ on (\Cref{fig:selection}, \Cref{tab:selection}).
The 7-snapshot grid and both alternative formation radii show the same rise; $K=6$ is the largest $K$ fitted there, and the criteria still improve at it (Appendix \Cref{tab:hasserobust}).
We therefore report $K=6$ as a convenient level of detail, not a selected one, and show $K=5$ and $K=8$ in the appendix; the rigid early-river patterns and the gathered late patterns described below appear at both.

\figwide{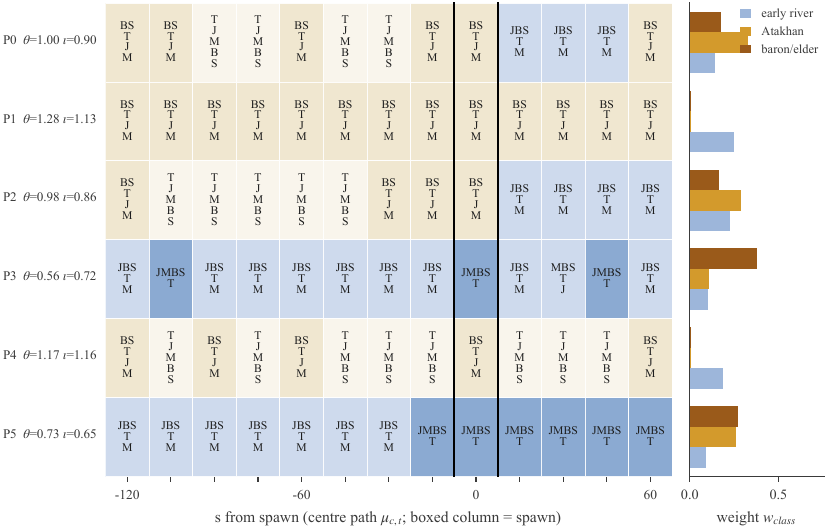}{The pattern dictionary ($K=6$). Each row is a pattern's centre path $\mu_{c,t}$ on $\Pi_5$, coloured by the number of blocks (darker = more gathered), with rigidity $\theta$ and inertia $\iota$ in Hasse steps; the boxed column is the spawn. Right: mixture weight of each pattern in the three objective classes.}{fig:patterns}

The six patterns separate first by objective class, then by how they gather (\Cref{fig:patterns}, \Cref{tab:patterns}).
Patterns~1 and~4 carry weight \V{patOneWEarly} and \V{patFourWEarly} at early river objectives and almost none elsewhere; they keep the bot lane together and everyone else apart (\texttt{\V{patOneAtSpawn}} at the spawn) and are the most rigid ($\theta=\V{patOneTheta}$, \V{patFourTheta}).
Patterns~3 and~5 gather four players with the top laner apart (\texttt{\V{patThreeAtSpawn}}) at the spawn and dominate late objectives (weights \V{patThreeWLate} and \V{patFiveWLate}); they are the least rigid ($\theta=\V{patThreeTheta}$, \V{patFiveTheta}) and the least inert ($\iota=\V{patThreeIota}$, \V{patFiveIota}): late setups change shape quickly.
Patterns~0 and~2 alternate between the spread and the bot-plus-support formations before the spawn and serve Atakhan most.
The model switches patterns with probability $\xi=\V{xiSix}$ per snapshot, so a setup typically keeps one pattern but can change it.

The `only gather' model $H_{\mathrm{mono}}$ (at $K=5$) fits \V{monoNats} nats worse on the full data with the same number of parameters, but its held-out deficit is only \V{monoDiff} nats per setup (fold SD \V{monoSD}; \V{monoFolds} of 5 folds favour the unrestricted model).
Splits, players peeling off to ward or flank, exist but are a small part of the structure.

\figwide{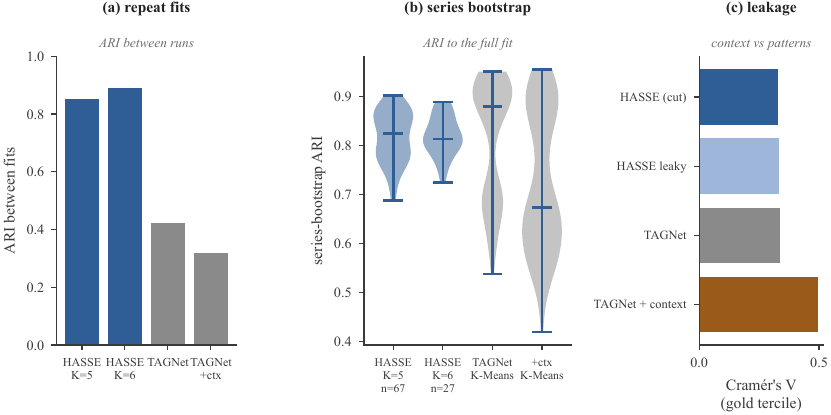}{Whole-model HASSE refits and TAGNet's K-Means re-runs are comparably stable, while single TAGNet runs agree less than best-of-ten HASSE fits. (a)~Agreement between repeated full-data fits: five HASSE fits, each the best of ten restarts, and ten single TAGNet runs (with and without its context vector). (b)~Series-bootstrap ARI: HASSE refits the whole model on each resample, TAGNet re-runs only its K-Means step on fixed embeddings. (c)~Cramér's V between the assignment and the gold tercile when context is kept out of the pattern layer and when it is fed in.}{fig:stability}

\paragraph*{Against TAGNet}
We ported TAGNet to the same V-layer tracks (it uses the \V{tgN} setups with at least 6 of 7 pre-spawn snapshots; three-layer distance-biased graph attention, fixed GRU, $K$-Means at $K=6$; ten seeds with and without its context vector).
Two TAGNet runs that differ only in the seed agree at ARI \citep{hubert1985comparing} \V{tgSeedAri} (\V{tgSeedAriCtx} with context); five independent HASSE fits, each the best of ten restarts, agree at \V{hsCrossKSix}, so this contrast compares best-of-ten fits with single runs (\Cref{fig:stability}a, \Cref{tab:hassetagnet}).
Refitting the whole HASSE model on series-bootstrap resamples gives ARI \V{hsBootKSix} (5\% quantile \V{hsBootKSixQ}, \V{hsBootKSixN} resamples), comparable to re-running only TAGNet's final $K$-Means step on fixed embeddings (\V{tgBootAri}).
TAGNet's clusters agree with its own counting baseline at ARI \V{tgCountAri}, and its silhouette at $K=6$ is \V{tgSilSix}.
Feeding the context vector in raises the association between TAGNet clusters and the gold tercile from Cramér's $V=\V{tgVgold}$ to \V{tgVgoldCtx} ($\times$\V{tgLeakRatio}); the same leak into HASSE's mixture weights moves it from \V{lkHasseCut} to \V{lkHasseLeaky}, because the weights fix only the first snapshot and the lattice emissions carry the rest.

\subsection{What context does to patterns}
\label{sec:context}

With the patterns fixed, a soft-label multinomial logit relates occupancy to context (\Cref{fig:context}).
Time and objective class carry most of the variation.
A one-SD gold lead raises the gathered late pattern~5 by \V{ameGoldFive} \V{ameGoldFiveCI} and lowers patterns~0 (\V{ameGoldZero}) and~4 (\V{ameGoldFour}); holding the dragon soul point raises pattern~4 by \V{ameSoulFour} \V{ameSoulFourCI}.
\V{ameSigCount} of \V{ameCells} effects have intervals that exclude zero.

\subsection{Fight laws}
\label{sec:fights}

\figcol{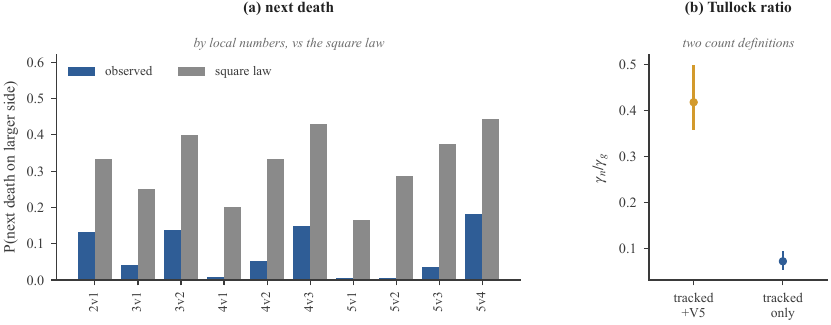}{The larger side rarely loses the next player. (a)~Probability that the next death falls on the larger side, by local numbers within 2,000 units, against the stochastic square law. (b)~The Tullock ratio $\gamma_n/\gamma_g$ under two count definitions.}{fig:fights}

The larger side almost never loses the next player (\Cref{fig:fights}a): \V{sSevenFiveThree} at 5v3 against \V{sSevenFiveThreeSq} under the square law, \V{sSevenTwoOne} at 2v1 against \V{sSevenTwoOneSq}.
The slope of the next-death logit on $\ln(n_{\mathrm{large}}/n_{\mathrm{small}})$ is \V{sSevenBn} \V{sSevenBnCI} with tracked counts, between the linear law (0) and the square law ($-1$), and \V{sSevenBnVfive} \V{sSevenBnVfiveCI} once killers and assisters from v5 are added; the levels sit far below the square law at every ratio.
In the Tullock fight law~\eqref{eq:tullock}, fitted on \V{tulGnN} fights with tracked counts, $\gamma_n=\V{tulGn}$ \V{tulGnCI} and $\gamma_g=\V{tulGg}$ \V{tulGgCI} (\Cref{tab:fights}).
With $\gamma_n/\gamma_g=\V{tulRatio}$, going from 4 to 5 players is worth as much as a \V{tulFiveFourGoldEq}\% per-player gold edge; side does not matter ($\gamma_{\mathrm{side}}=\V{tulSide}$ \V{tulSideCI}).
Adding v5 participants, who are counted because they took part, raises $\gamma_n$ to \V{tulGnVfive}; we use the tracked counts and treat this as a sensitivity.

\subsection{What the equilibrium reproduces}
\label{sec:qre}

\figwide{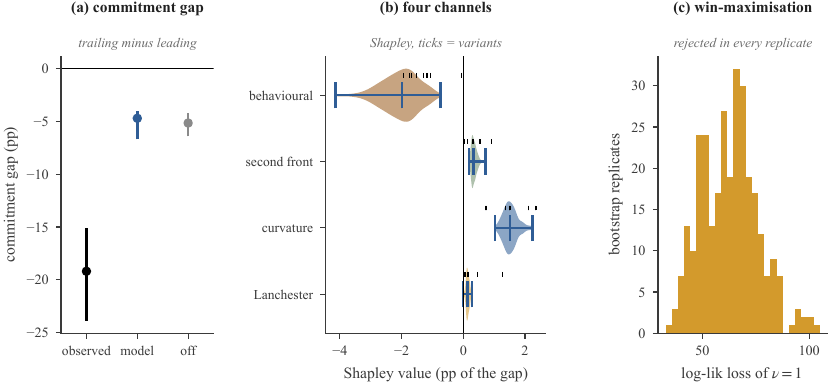}{The allocation game reproduces a quarter of the commitment gap, almost all of it from state variables rather than the four channels. (a)~Trailing-minus-leading gap in the share of teams with three or more players at the pit at the commitment time: observed, LB-QRE with all channels, and with all channels off. (b)~Shapley values of the four channels over 300 bootstrap replicates (ticks: specification variants). (c)~Log-likelihood loss from imposing $\nu=1$ in each replicate (a nested model, so the loss is non-negative by construction; its size is the result).}{fig:qre}
\input{tables/tab_qre}

LB-QRE is fitted on \V{qContests} contests (\Cref{tab:qre}).
Decision precision is $\lambda=\V{qLam}$ \V{qLamCI} per win-probability unit, so a one-point difference in expected win probability changes choice odds by \V{qLamOnePoint}\%.
The weight on the objective's own value is $\nu=\V{qNu}$ \V{qNuCI}: choices respond to the value of fights and fronts and to their costs, early positioning ($\kappa=\V{qKappa}$), travel ($\psi=\V{qPsi}$ per minute) and committed players ($\omega_k=\V{qOmegaK}$, $\omega_m=\V{qOmegaM}$), and barely to the objective itself.
Imposing $\nu=1$ ($H_{\mathrm{winmax}}$) costs \V{qWinmaxPoint} log-likelihood points on the data (likelihood ratio about \V{qWinmaxLR}); across bootstrap replicates the loss has mean \V{qWinmax} \V{qWinmaxCI} and is at least \V{qWinmaxMin} in all \V{qReps}.
Without action costs the fit collapses to $\lambda=0$ (log-likelihood \V{nocostLL} against \V{qLL}).
The saturated benchmark reaches \V{qLLsat}: the equilibrium restrictions cost likelihood, as they should if they have content.
Only \V{qConvShare}\% of the \V{qReps} bootstrap replicates (\V{qConvN}) meet the pseudo-likelihood convergence tolerance; mean $\nu$ is \V{qNuConv} in the converged and \V{qNuNonconv} in the other replicates, and the mean Shapley values differ by at most \V{qConvShapDiff}\pp.

The observed commitment gap is \V{gapObs}\pp{} \V{gapObsCI}; the model produces \V{gapModel}\pp{} \V{gapModelCI}, about \V{gapShare} of it (\Cref{fig:qre}a).
Essentially all of the model's gap is there with every channel off (\V{gapNone}\pp), so the four channels together add only \V{gapChannels}\pp{} net: the gold term of the fight law, the alive counts, and the travel and feasibility constraints still differ between gold states.
The channels then pull in opposite directions (\Cref{tab:shapley}): curvature of $W$ moves trailing teams towards contesting (\V{shC}\pp{} \V{shCCI}), team behaviour moves them away (\V{shB}\pp{} \V{shBCI}), and the Lanchester (\V{shL}) and second-front (\V{shF}) channels are small.
Curvature exceeds Lanchester by \V{shCminusL}\pp{} \V{shCminusLCI}.
Every sign holds across the six non-degenerate specification variants (pit radius 1,000 and 2,000, result values, dropping held or interpolated decisions, the V5 fight law, $\nu=1$), with Shapley magnitudes up to \V{varShapMax}\pp{}; the two no-cost variants collapse to $\lambda=0$, so their agreement is vacuous (Appendix \Cref{tab:qrevariants}).

The named tests sharpen the reading.
Unconditionally trailing teams contest less ($H_{\mathrm{Simpson}}$ raw gap \V{simpRaw} \V{simpRawCI}); matched on fight probability and stake the coefficient is \V{simpMatched} \V{simpMatchedCI}, in the predicted direction and not distinguishable from zero.
Among teams that do not commit, second-front use does not rise as fight probability falls ($H_{\mathrm{trade}}$ slope \V{trade} \V{tradeCI}).
In the model, trailing teams stay away from objectives because their fights are worse and their players are farther away, not because they prefer to trade.

\subsection{Team fingerprints are mostly noise}
\label{sec:fingerprints}

\figwide{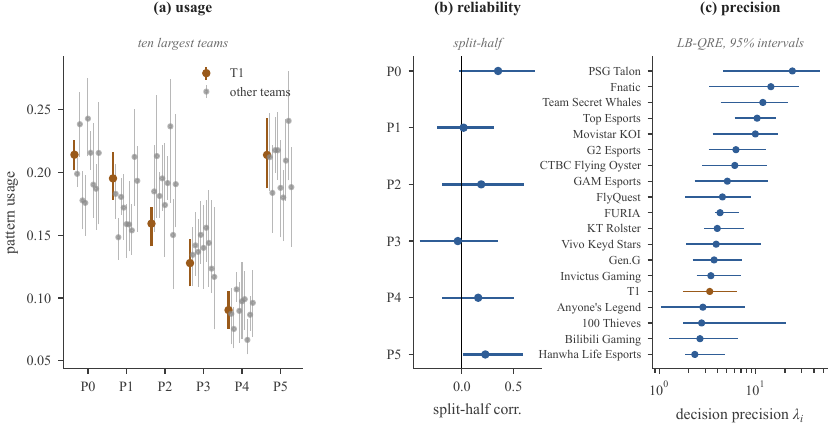}{Team fingerprints come with wide intervals. (a)~Pattern usage of the ten teams with most setups (T1 in red) with series-bootstrap intervals. (b)~Split-half reliability of team usage per pattern. (c)~Team decision precision $\lambda_i$ with 95\% intervals.}{fig:fingerprints}

TAGNet reports team fingerprints without intervals.
With intervals the picture is sober (\Cref{fig:fingerprints}).
Split-half correlations of team usage range from \V{shMin} to \V{shMax}, and the interval includes zero for \V{shZeroIn} of 6 patterns.
Team precision parameters overlap: T1's $\lambda=\V{lamTOne}$ \V{lamTOneCI} and Gen.G's \V{lamGenG} \V{lamGenGCI} are not distinguishable.
At \V{setupsTeamMin} to \V{setupsTeamMax} setups per team, patterns describe the league; they do not yet separate its teams.

%% file: tables/tab_qre.tex
\begin{table}[t]
\centering
\small
\caption{\textbf{Choices respond to fights, travel and player costs, and little to the objective's own value.} LB-QRE global parameters (1,454 contests, 19 teams with own parameters; 300 series-block bootstrap replicates).}
\label{tab:qre}
\setlength{\tabcolsep}{4pt}
\begin{adjustbox}{max width=\linewidth}
\begin{tabular}{llrr}
\toprule
 & Meaning & Est. & 95\% CI \\
\midrule
$\lambda$ & decision precision (1/win-prob.) & 4.297 & [3.005, 6.457] \\
$\nu$ & weight on the objective's own value & 0.018 & [0.013, 0.026] \\
$\kappa$ & cost of early positioning & 1.794 & [1.674, 1.927] \\
$\psi$ & cost per travel minute & 1.813 & [1.476, 2.170] \\
$\omega_k$ & cost per player at the pit & 0.185 & [0.126, 0.239] \\
$\omega_m$ & cost per player on 2nd front & 0.688 & [0.639, 0.772] \\
\midrule
$\ell$ & log-lik. LB-QRE / saturated & -8,343 & -6,826 \\
$H_{winmax}$ & log-lik. loss of $\nu=1$ & 62.9 & [40.1, 92.8] \\
\bottomrule
\end{tabular}
\end{adjustbox}
\end{table}

%% file: sections/06_related.tex
\section{Related Work}
\label{sec:related}

\paragraph*{Tactics in esports}
TAGNet \citep{zhang2026tagnet} is the closest study and the template we mirror section by section; we replace its learned embedding by objects with units, its fixed $K$ by held-out evidence, and its context-in-the-representation design by a cut.
Spatio-temporal team behaviour in MOBAs was first studied in \emph{Dota~2}, where skill levels differ in how teams spread across zones \citep{drachen2014skill}.
In \emph{Counter-Strike}, actions have been valued by changes in win probability \citep{xenopoulos2020valuing} and trajectory data released at scale \citep{xenopoulos2022esta}.
LoL research has focused on outcome prediction \citep{hodge2021win,yang2022predicting}, player rating \citep{maymin2020smart,debois2025pandaskill}, team networks \citep{mora2019team} and match-level performance indicators \citep{novak2020performance}.
We use win probability the way action-valuation work does \citep{decroos2019actions,xenopoulos2020valuing}, but to price the outcomes of a game between teams rather than to score individual actions.

\paragraph*{Space and formations in team sports}
Dominant regions \citep{taki2000visualization} are Voronoi cells of player positions; we compute them on a navigation graph so that walls count.
Role-based alignment of formations \citep{lucey2013representing,bialkowski2014large} and the broader spatio-temporal literature \citep{gudmundsson2017spatio} treat formations as geometric templates.
We treat a formation as a set partition and use the lattice metric \citep{monjardet1981metrics} to count merges and splits.
Centred partition processes \citep{paganin2021centered} place priors around a centre partition; HASSE lets the centre move along a path over time and adds rigidity and inertia, inside a sticky HMM \citep{fox2011sticky}.
Sparse finite mixtures \citep{fruhwirth2019here} motivate letting the evidence, not a fixed $K$, set the number of patterns.

\paragraph*{Game theory in sport}
Penalty kicks and serves are the canonical tests of equilibrium play by professionals \citep{walker2001minimax,chiappori2002testing,palacios2003professionals}.
Risk taking by trailing contestants is documented in tournaments \citep{genakos2012interim} and derived from the convexity of win probability \citep{cabral2003rd,skinner2011scoring}, the logic behind pulling the goalie \citep{beaudoin2010pulling}.
Our explanation layer combines a quantal response equilibrium \citep{mckelvey1995quantal}, whose empirical content requires fixed payoffs \citep{haile2008empirical}, with Lanchester--Tullock fights \citep{lanchester1916aircraft,tullock1980efficient} and a Blotto second front \citep{roberson2006colonel}, estimated as in empirical industrial organisation \citep{aguirregabiria2007sequential}.
The Shapley decomposition \citep{shapley1953value} lets the convexity argument compete with the fight law and the second front in one unit.

%% file: sections/07_limitations.tex
\section{Limitations}
\label{sec:limitations}

Positions come from the broadcast minimap; with four or more other champions nearby only \V{trkCrowdFourLtSix} of positions are within 600 units, and formations agree exactly with Riot's in \V{formExact} of snapshots, which is why every headline rests on sign agreement across radii, an R-layer check or a sensitivity run without interpolated positions.
Vision, cooldowns, wave states and voice communication are unobserved, so every behavioural statement is conditional on the observed state.
The data cover one rule set and two international events; teams with few games carry wide intervals.
LB-QRE reproduces about a quarter of the observed commitment gap, almost all of it through state variables rather than its strategic channels; the rest is behaviour the allocation game in its present form does not capture.
Win probability and event values are outcome-only associations from the archive, not causal effects.

%% file: sections/08_conclusion.tex
\section{Conclusion}
\label{sec:conclusion}

A setup before an objective is an allocation of five players on a walled map against a public clock.
Measured that way, professional teams behave consistently: trailing teams leave the pit empty after the spawn, setups follow a small number of reproducible walks on the partition lattice, and commitment responds to fight odds, travel and the cost of each player far more than to the prize itself.
None of this requires a neural embedding, and much of what an embedding pipeline would report changes with the seed or the estimator.

The same objects apply wherever teams split players between a contested point and the rest of the map: other MOBAs, tactical shooters and team sports with set pieces.
The tracks, setups and code are released with the paper.

%% file: sections/A_appendix.tex
\section{Proofs}
\label{app:proofs}

\begin{proof}[Proof of \Cref{lem:metric}]
$\Pi_5$ ordered by refinement is a geometric, hence upper semimodular, lattice with rank $r(\pi)=5-|\pi|$.
For upper semimodular lattices $d(\sigma,\tau)=2r(\sigma\vee\tau)-r(\sigma)-r(\tau)$ is a metric equal to the shortest-path distance in the covering graph \citep{monjardet1981metrics}; substituting the rank gives the stated form.
Each covering step is one merge (rank $+1$) or one split (rank $-1$). On a path from $\sigma$ to $\tau$ with $M$ merges and $S$ splits, $M-S=r(\tau)-r(\sigma)$; on a shortest path also $M+S=d(\sigma,\tau)=2r(\sigma\vee\tau)-r(\sigma)-r(\tau)$. Hence $M=r(\sigma\vee\tau)-r(\sigma)=|\sigma|-|\sigma\vee\tau|$ and $S=|\tau|-|\sigma\vee\tau|$.
We verified all three statements exhaustively: breadth-first search on the Hasse diagram of the \V{latticeN} partitions (\V{latticeEdges} covering edges) reproduces $d$ for all pairs, and the triangle inequality holds for all triples.
\end{proof}

\begin{proof}[Proof of \Cref{prop:threshold}]
Contesting yields expected win probability $pW(\eta+G)+(1-p)W(\eta-L)$, conceding yields $W(\eta-b+c)$.
Their difference is $W(\eta-L)+pS-W(\eta-b+c)=S(p-\pi^*)$ with $S$ and $\pi^*$ as stated.
Under logit choice with precision $\lambda$ the probability of the first option is $\Lambda(\lambda\cdot\text{difference})$.
\end{proof}

\begin{proof}[Proof of \Cref{prop:unique}]
With $\nu=1$, $W_B=1-W_A$ implies that every outcome changes $B$'s win probability by minus $A$'s change, so $U_B=-U_A^\top$ in win-probability units, and the remaining terms ($\kappa\tau$, $\psi T_k$, $\omega$) depend only on a team's own action; write them $c_A^\top x$ and $c_B^\top y$ for mixed strategies $x,y$.
Team $A$'s logit best response to $y$ maximises $\lambda_A x^\top Uy-c_A^\top x+H(x)$, team $B$'s maximises $-\lambda_B x^\top Uy-c_B^\top y+H(y)$.
Dividing by $\lambda_A$ and $\lambda_B$, both are the first-order conditions of the saddle problem $\max_x\min_y F(x,y)$ with $F$ as stated.
$F$ is continuous, strictly concave in $x$ and strictly convex in $y$ on compact convex sets, so a saddle point exists (minimax theorem) and is unique by strictness; every logit QRE is such a saddle point and conversely.
\end{proof}

\section{HASSE: Estimation}
\label{app:hasse}

\paragraph*{Likelihood} For setup $e$ with pattern sequence $z_{1:T}$, the complete-data likelihood is $w_{\mathrm{cls}(e),z_1}\prod_t P(z_t\mid z_{t-1})\prod_t P_{z_t}(\pi_t\mid\pi_{t-1})\prod_{B\in\pi_t}P_{z_t}(g_B\mid\pi_t)$; snapshots after the kill are missing and drop out of the product.
\paragraph*{E step} Exact forward--backward over $K$ states.
\paragraph*{M step} (i)~For each pattern and snapshot, the centre $\mu_{c,t}$ is chosen among the 52 partitions by an exact Viterbi pass that adds a smoothness penalty of one unit per Hasse step between consecutive centres. (ii)~$(\theta_c,\iota_c)$ maximise a concave exponential-family likelihood whose gradient is observed minus expected distance, bounded in $[0,20]$. (iii)~Zone parameters by penalised multinomial logit (Gaussian penalty 0.1). (iv)~$w$, $\xi$, $q$ in closed form.
\paragraph*{Initialisation and selection} Ten restarts per $K$, even seeds from $k$-medoids on the lattice path distance; held-out log-likelihood on five series-grouped folds; BIC and ICL on the full data. Recovery on synthetic data drawn from the model (three planted patterns, \V{synthN} setups, three snapshots, five restarts): all restarts reach the same log-likelihood (spread \V{synthLLspread}), \V{synthPathsOK} centre paths are recovered exactly, rigidity and inertia lie within \V{synthThetaErr} and \V{synthIotaErr} of the planted values, and the first-snapshot assignment has ARI \V{synthAri} with the truth (\V{synthAriMissing} when the last snapshot is missing in 40\% of setups).
\paragraph*{$H_{\mathrm{mono}}$} The Viterbi step for $\mu$ is restricted to transitions $\mu_{t}\leq\mu_{t+1}$ in the refinement order.

\section{LB-QRE: Specification}
\label{app:qre}

\paragraph*{Commitment time and actions} $t_c$ is the first at-risk snapshot in $[s,s{+}180]$ with three or more players of either team within $R$ of the pit, otherwise the last at-risk snapshot (objective taken next, or window end). $m$ counts alive players in zone `far' at $t_c$; $\tau=1$ if the team already had at least $k$ players at the pit at the spawn. Players farther than 60\,s of movement at the spawn are infeasible for the pit.
\paragraph*{State values} For each contest, values for the objective (to either team), a fight won or lost by two kills, and one tower are read from the event regressions at the out-of-fold $W$ of the last v5 frame at least 30\,s before the spawn; dragons resolve to the gainer's next drake index.
\paragraph*{Estimation} Nested pseudo-likelihood alternates a conditional-logit maximisation in $\theta$ (analytic gradient) with a strategy update; team deviations are shrunk towards global values with prior standard deviations 1.0 for $\log\lambda$, $\kappa$, $\psi$ and 0.5 for $\log\nu$, $\omega_k$, $\omega_m$.
Teams with too few contests share a pooled `other' parameter.

\section{Additional Results}
\label{app:results}

\input{tables/tab_values}
\input{tables/tab_tracker}
\input{tables/tab_vlayer}
\input{tables/tab_rlayer}
\figwide{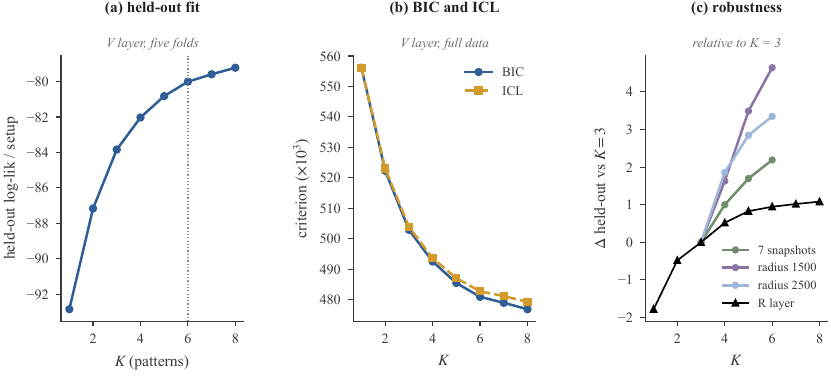}{The evidence favours rich structure with diminishing returns. (a)~Held-out log-likelihood per setup over five series-grouped folds against the number of patterns $K$ (V layer). (b)~BIC and ICL. (c)~Held-out log-likelihood relative to $K=3$ for the 7-snapshot grid, formation radii 1,500 and 2,500, and the R layer.}{fig:selection}
\input{tables/tab_selection}
\input{tables/tab_patterns}
\input{tables/tab_hasse_tagnet}
\figwide{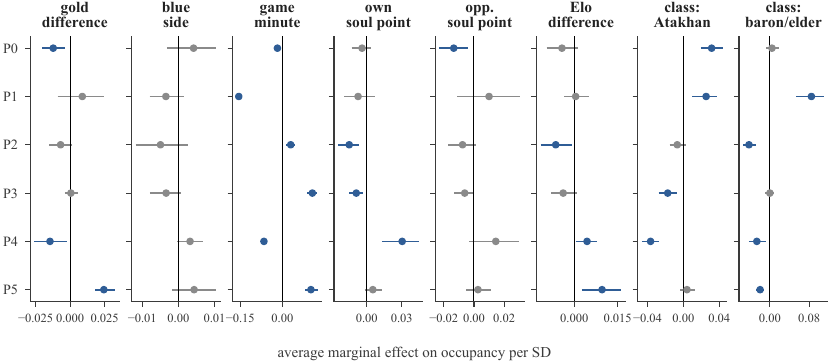}{Context shifts pattern use modestly once time and objective class are held fixed. Average marginal effect on each pattern's occupancy per standard deviation of each covariate, from a soft-label multinomial logit fitted after the patterns (cut design); blue: 95\% interval excludes zero.}{fig:context}
\input{tables/tab_fights}
\input{tables/tab_shapley}

\figcol{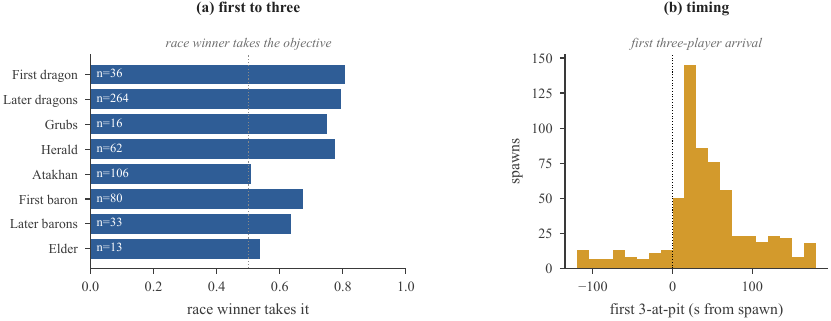}{The first team to reach three players at the pit usually takes river objectives, but not Atakhan. (a)~Share of spawns with a decided race in which the race winner took the objective within 180\,s. (b)~Time of the first three-player arrival relative to the spawn.}{fig:arrival}
\figwide{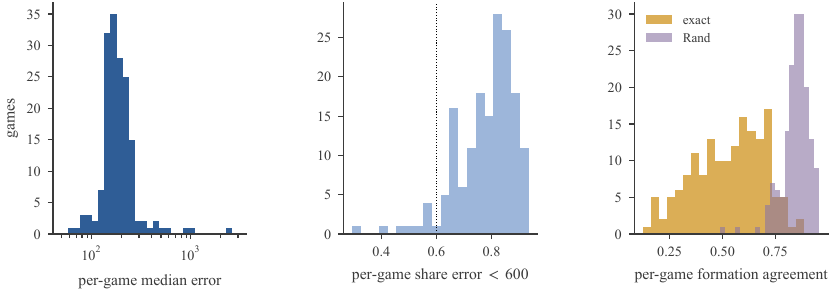}{Most games track well; a few are outliers. Tracker quality per game. Per-game median error, share of errors below 600 units, and formation agreement with v5 at held-out minutes.}{fig:trackerpergame}
\figwide{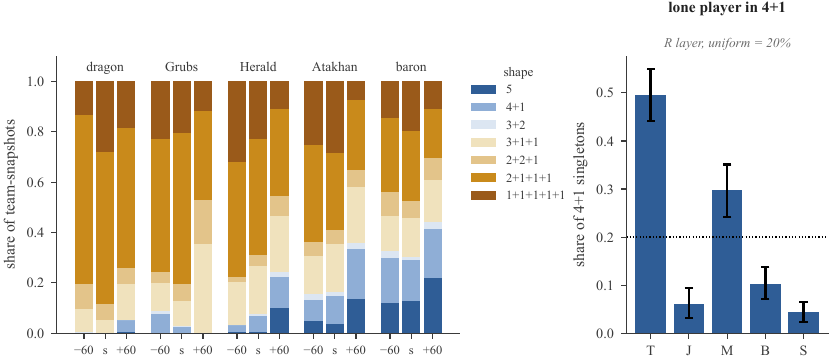}{The lone player in a 4+1 formation is usually the top laner. Formation shapes on the R layer at $s{-}60$, $s$ and $s{+}60$ by objective (left) and the role of the lone player in 4+1 formations against the uniform 20\% (right). The top laner is the lone player in \V{singleTop}\% \V{singleTopCI} of 4+1 formations, the mid laner in \V{singleMid}\% \V{singleMidCI}.}{fig:shapes}
\figcol{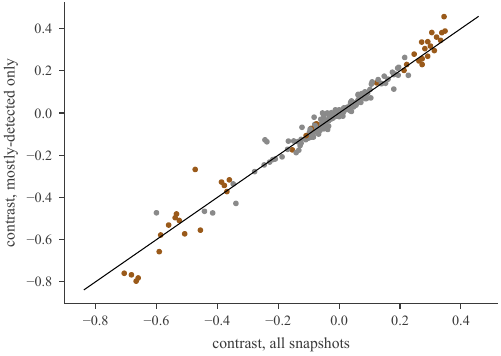}{Dropping mostly-interpolated snapshots leaves the contrasts essentially unchanged. V-layer trailing-minus-leading contrasts computed on all snapshots and on snapshots with at most half of the positions held or interpolated (radius 1,500; red: headline cells).}{fig:cleanfull}
\input{tables/tab_vocab}
\input{tables/tab_hasse_robust}
\figwide{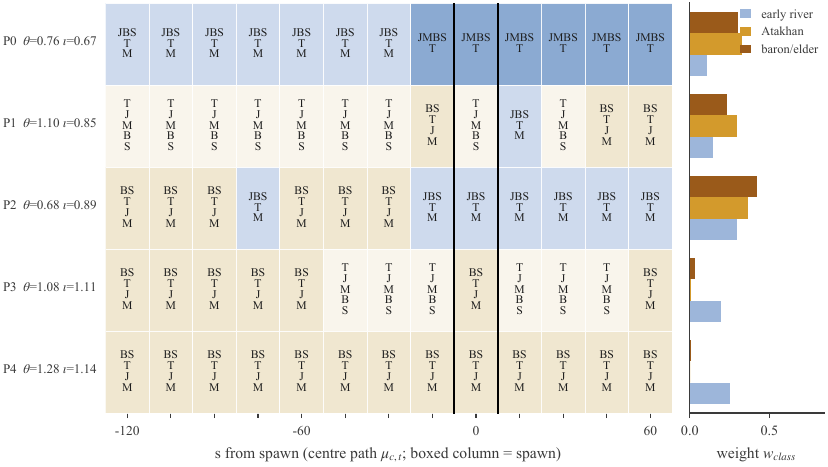}{The $K=5$ dictionary keeps the early-river and late patterns. Pattern dictionary at $K=5$ ($\xi=\V{xiFive}$).}{fig:patternsK5}
\figwide{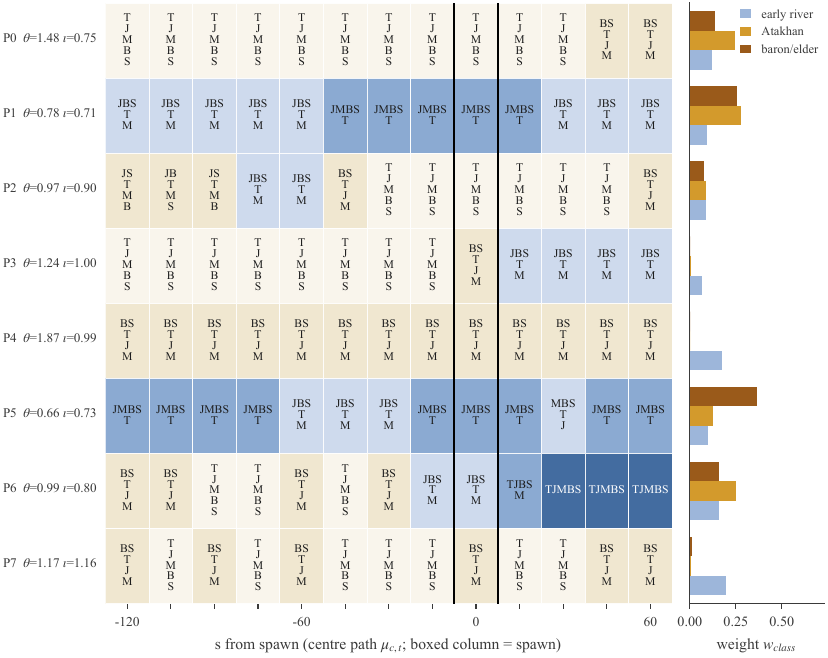}{The $K=8$ dictionary keeps the early-river and late patterns. Pattern dictionary at $K=8$ ($\xi=\V{xiEight}$).}{fig:patternsK8}
\figwide{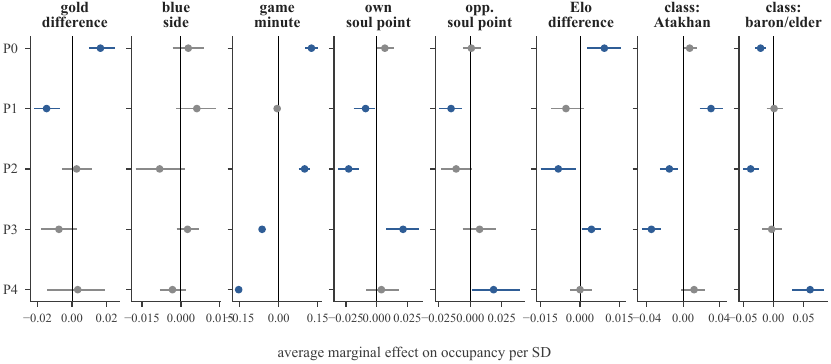}{Context effects are reported at $K=5$ as a robustness check. Context effects at $K=5$.}{fig:contextK5}
\figwide{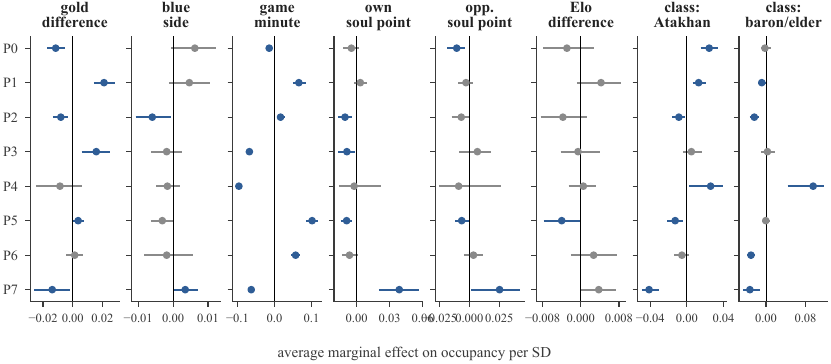}{Context effects are reported at $K=8$ as a robustness check. Context effects at $K=8$.}{fig:contextK8}
\input{tables/tab_context}
\figcol{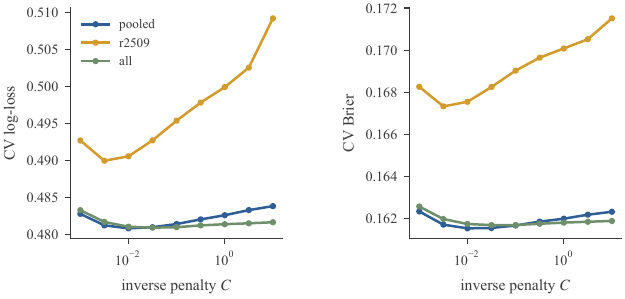}{Cross-validated loss is flat near the chosen penalty. Cross-validated log-loss and Brier score of $W$ against the inverse penalty $C$ for the three specifications.}{fig:wcv}
\input{tables/tab_wfit}
\figwide{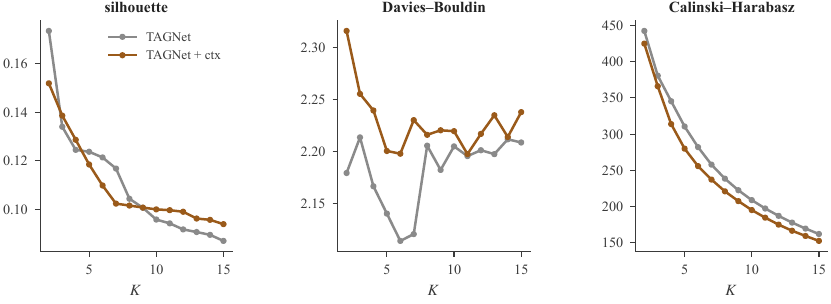}{TAGNet's own criteria fall as $K$ grows. TAGNet's own cluster criteria on our setups, $K=2..8$, with and without the context vector.}{fig:tagnetk}
\figwide{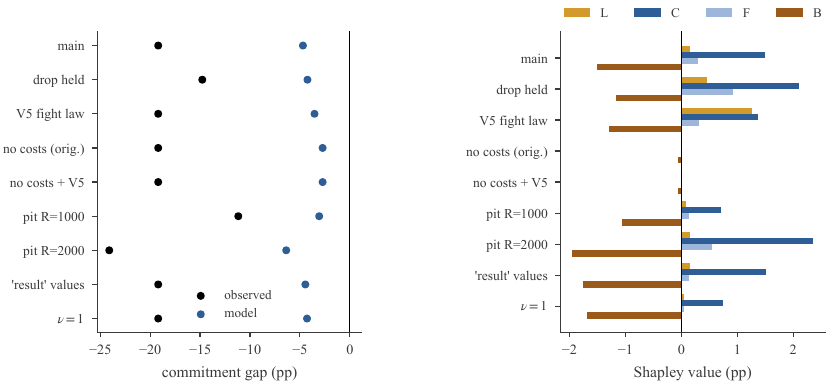}{Gaps and Shapley signs hold across the six non-degenerate specification variants. LB-QRE specification variants: observed and model commitment gaps (left) and the four Shapley values (right).}{fig:qrevariants}
\input{tables/tab_qre_variants}
\figwide{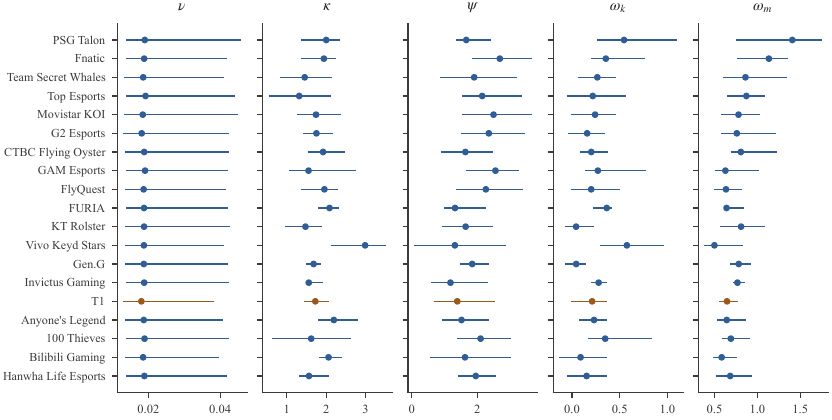}{Team parameters overlap widely. Team cost and valuation parameters with 95\% intervals (T1 in red), teams ordered by $\lambda$.}{fig:teamparams}
\input{tables/tab_team_params}
\input{tables/tab_usage}

\section{T1 and Faker}
\label{app:t1}

T1 contributes \V{tOneSetups} setups, the most of any team, and won Worlds~2025.
Every analysis here was specified before it was run, compares T1 or Faker with the rest of the league on the same measures, and reports intervals; null results are reported as such.

\paragraph*{A1 Pattern fingerprint}
T1's usage over the six patterns is \V{tOneUsage} (\Cref{tab:usage}).
Its nearest team is \V{tOneNearest} (total-variation distance \V{tOneNearestTV}; nearest in \V{tOneNearestP}\% of resamples), then \V{tOneSecond} and \V{tOneThird}.
Given the low split-half reliability of all fingerprints (\Cref{sec:fingerprints}), this places T1 inside the league's main cluster rather than apart from it.

\paragraph*{A2 Decision parameters}
T1's precision is $\lambda=\V{lamTOne}$ \V{lamTOneCI} and its precision on the objective $\lambda\nu=\V{lamObjTOne}$ \V{lamObjTOneCI}; neither separates T1 from Gen.G ($\lambda=\V{lamGenG}$ \V{lamGenGCI}) or from the league (\Cref{fig:teamparams}, \Cref{tab:teamparams}).

\figwide{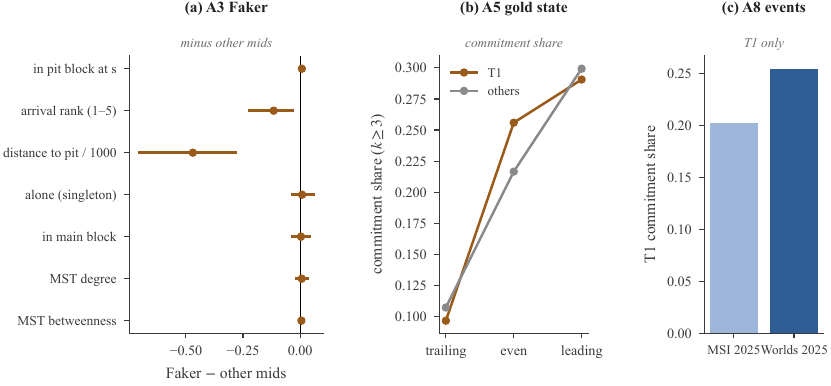}{Faker arrives earlier, but T1 behaves like the rest of the league. T1 and Faker in numbers. (a)~A3: Faker minus other starting mid laners on seven position measures in pre-spawn snapshots (distance in thousands of units). (b)~A5: commitment share by gold state, T1 against all other teams. (c)~A8: T1's commitment share at MSI and Worlds~2025.}{fig:t1}
\input{tables/tab_faker}
\input{tables/tab_t1_state}

\paragraph*{A3 Faker in the setup}
Against other starting mid laners ($n=\V{fakerN}$ vs \V{fakerOthersN} setups), Faker is closer to the pit before the spawn by \V{fakerDpit} \V{fakerDpitCI} units and arrives earlier in his team's order (rank \V{fakerRank} \V{fakerRankCI}); he is no more often alone, in the main block or central in the team's spanning tree (\Cref{tab:faker}, \Cref{fig:t1}a).
`Faker gets to the objective first' is measurable and true; `Faker holds the formation together' is measurable and not supported.

\paragraph*{A5 Behind and ahead}
T1's trailing-minus-leading commitment contrast is \V{tOneTL} \V{tOneTLCI}, the rest of the league's \V{othersTL}; the difference is \V{tOneMinusOthers} \V{tOneMinusOthersCI} (\Cref{tab:t1state}).
`T1 does not panic when behind' describes the whole league as well as T1.

\paragraph*{A6 Fights}
In \V{tOneFights} fights involving T1, the mean residual of the fight law is \V{tOneFightRes} \V{tOneFightResCI}: at equal numbers and gold T1 wins its fights as often as the law predicts.

\paragraph*{A7 The Worlds 2025 final}
\Cref{tab:ledger} and \Cref{fig:ledger} list all \V{ledgerSpawns} spawns of T1's 3--2 final against KT Rolster with both actions, the fight probability, the break-even of \Cref{prop:threshold} and the objective's value.

\figwide{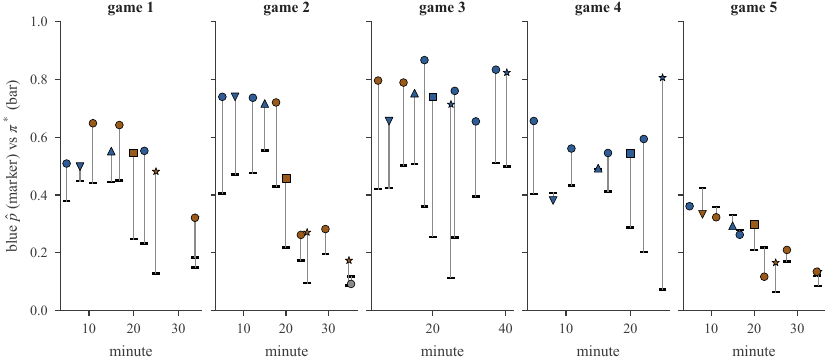}{The ledger lists every spawn of the Worlds~2025 final. A7: every spawn of the Worlds~2025 final. Markers: blue's fight probability $\hat p$ (colour = taker: blue/red/none; shape = objective); bars end at the break-even $\pi^*$.}{fig:ledger}
\input{tables/tab_ledger}

\paragraph*{A8 MSI and Worlds}
Under the same rules T1 committed three or more players in \V{tOneMSI} of contests at MSI and \V{tOneWorlds} at Worlds (\Cref{fig:t1}c).

\paragraph*{A9 Earlier rule sets}
Worlds 2023 and 2024 were played under different objective rules, so T1's earlier setups are outside this study.

\paragraph*{A10 Narratives and measures}
Three common narratives map onto measures here: `T1 does not panic when behind' (A5: no different from the league), `Faker controls the map' (A3: earlier arrival, no extra centrality) and `T1 wins fights at equal numbers' (A6: no residual).
Champion-pool depth, draft play and communication are outside what positions can measure.

\section{Dataset}
\label{app:dataset}

The release contains, for the \V{nGames} core games: the clock alignment per video, minimap registration parameters, 2-fps tracks with a source label per position (detected, interpolated, held, pinned, dead), the \V{nSpawns} spawn records, the \V{nSetups} team-setups on the 15-s grid with partitions, zones, $k$, $\tau^{(k)}$ and pit control and the navigation graph; the code that produces every figure and table is released with them.

%% file: tables/tab_values.tex
\begin{table}[t]
\centering
\small
\caption{\textbf{Every event is priced in win-probability points.} Event values in win-probability points at $W=0.5$, from outcome-only regressions on the archive ($\mathrm{logit}\,W_{t+180} = a\,\mathrm{logit}\,W_t + \beta(t)\,\mathrm{dir}$); `total effect' replaces $W_{t+180}$ by the result.}
\label{tab:values}
\setlength{\tabcolsep}{4pt}
\begin{adjustbox}{max width=\linewidth}
\begin{tabular}{lrrrr}
\toprule
Event & $n$ & $a$ & WPA (180 s) & Total effect \\
\midrule
Void grubs & 518 & 1.00 & 4.0 & 4.5 \\
Rift Herald & 489 & 1.00 & 6.2 & 4.9 \\
Dragon 1 & 4,612 & 1.03 & 6.0 & 8.1 \\
Dragon 2 & 3,544 & 1.01 & 8.5 & 6.8 \\
Dragon 3 (soul point) & 2,395 & 0.85 & 21.6 & 20.3 \\
Dragon 4 (soul) & 1,145 & 0.53 & 36.4 & 25.2 \\
Atakhan & 487 & 0.95 & 20.5 & 16.7 \\
Baron & 523 & 0.71 & 32.6 & 28.5 \\
Elder & 257 & 0.74 & 32.2 & 41.1 \\
Fight, per kill (away from objectives) & 14,563 & 0.84 & 8.7 & 8.1 \\
Fight, per kill (near objective) & 18,891 & 0.88 & 8.4 & 8.7 \\
Tower & 31,090 & 0.81 & 12.7 & 4.9 \\
\bottomrule
\end{tabular}
\end{adjustbox}
\end{table}

%% file: tables/tab_tracker.tex
\begin{table}[t]
\centering
\small
\caption{\textbf{Median tracking error is small against the map width and grows with crowding.} Tracker error against Riot V5 positions at held-out odd minutes (23,286 champion-minutes, 164 games; map units, map width 14,820). Crowding = other champions within 2,000 units.}
\label{tab:tracker}
\setlength{\tabcolsep}{4pt}
\begin{adjustbox}{max width=\linewidth}
\begin{tabular}{lrrrrrr}
\toprule
Subset & \% & Median & P90 & $<$300 & $<$600 & $<$1000 \\
\midrule
pooled & 100 & 177 & 3,739 & 0.70 & 0.79 & 0.82 \\
\midrule
detected & 78 & 146 & 589 & 0.83 & 0.90 & 0.91 \\
interpolated & 21 & 921 & 5,894 & 0.25 & 0.40 & 0.52 \\
held & 1 & 3,012 & 7,151 & 0.06 & 0.10 & 0.15 \\
\midrule
crowding 0 & 29 & 146 & 502 & 0.82 & 0.90 & 0.92 \\
crowding 1 & 28 & 148 & 1,374 & 0.81 & 0.88 & 0.89 \\
crowding 2 & 13 & 180 & 3,269 & 0.71 & 0.81 & 0.83 \\
crowding 3 & 11 & 214 & 6,061 & 0.61 & 0.72 & 0.76 \\
crowding 4+ & 19 & 621 & 7,675 & 0.39 & 0.49 & 0.58 \\
\bottomrule
\end{tabular}
\end{adjustbox}
\end{table}

%% file: tables/tab_vlayer.tex
\begin{table}[t]
\centering
\small
\caption{\textbf{Headline contrasts concentrate after the spawn.} V layer: headline trailing-minus-leading contrasts (sign agrees at three pit radii, $q_{BH}<0.05$). Cells = number of headline time points; the largest is shown. Units: pp for shares, players for mean $k$.}
\label{tab:vlayer}
\setlength{\tabcolsep}{4pt}
\begin{adjustbox}{max width=\linewidth}
\begin{tabular}{llrrrr}
\toprule
Objective & Quantity & Cells & Peak at & T$-$L & 95\% CI \\
\midrule
Herald & pit empty & 1 & $s-75$ & +12.3 & [3.4, 21.0] \\
Herald & mean $k$ & 1 & $s-75$ & $-$0.15 & [$-$0.27, $-$0.05] \\
Grubs & pit empty & 1 & $s+15$ & +26.2 & [6.9, 46.7] \\
Grubs & mean $k$ & 1 & $s+15$ & $-$0.36 & [$-$0.64, $-$0.12] \\
First dragon & $\geq$3 at pit & 1 & $s+90$ & $-$9.6 & [$-$17.0, $-$2.2] \\
First dragon & mean $k$ & 1 & $s+105$ & $-$0.39 & [$-$0.66, $-$0.11] \\
Later dragons & pit empty & 6 & $s+75$ & +35.0 & [19.0, 51.5] \\
Later dragons & $\geq$3 at pit & 3 & $s+15$ & $-$10.9 & [$-$15.7, $-$5.5] \\
Later dragons & mean $k$ & 7 & $s+75$ & $-$0.75 & [$-$1.05, $-$0.42] \\
Atakhan & pit empty & 6 & $s+105$ & +37.2 & [20.0, 54.0] \\
Atakhan & mean $k$ & 5 & $s+105$ & $-$0.84 & [$-$1.28, $-$0.43] \\
First baron & pit empty & 9 & $s+45$ & +34.8 & [18.7, 50.0] \\
First baron & mean $k$ & 8 & $s+135$ & $-$0.63 & [$-$1.03, $-$0.22] \\
Later barons & pit empty & 2 & $s+15$ & +36.4 & [15.4, 59.5] \\
Later barons & mean $k$ & 1 & $s-45$ & $-$0.27 & [$-$0.45, $-$0.12] \\
\bottomrule
\end{tabular}
\end{adjustbox}
\end{table}

%% file: tables/tab_rlayer.tex
\begin{table}[t]
\centering
\small
\caption{\textbf{Exact Riot frames confirm the baron contrast and the estimator dependence.} R layer (exact Riot frames) at $s{+}60$, risk set, pit radius 1,500. Top: trailing minus leading by objective (headline = sign agrees at radii 1,000/1,500/2,000 and $q_{BH}<0.05$). Bottom: one pooled contrast under three estimators.}
\label{tab:rlayer}
\setlength{\tabcolsep}{4pt}
\begin{adjustbox}{max width=\linewidth}
\begin{tabular}{llrrrc}
\toprule
Objective / estimator & Quantity & T$-$L (pp) & 95\% CI & $q_{BH}$ & Head. \\
\midrule
Elemental dragon & pit empty & +9.1 & [$-$5.6, 22.8] & 0.584 &  \\
 & mean $k$ & $-$0.13 & [$-$0.43, 0.15] & 0.769 &  \\
Atakhan & pit empty & +16.4 & [1.4, 32.3] & 0.213 &  \\
 & mean $k$ & $-$0.37 & [$-$0.78, 0.00] & 0.269 &  \\
Baron Nashor & pit empty & +19.7 & [11.4, 29.9] & 0.000 & \checkmark \\
 & mean $k$ & $-$0.39 & [$-$0.64, $-$0.14] & 0.000 & \checkmark \\
\midrule
TAGNet-style team mean & $\geq$3 at pit & $-$6.4 & [$-$11.2, $-$0.1] & $p$=0.046 &  \\
pooled (exposure-weighted) & $\geq$3 at pit & $-$4.1 & [$-$8.2, 0.0] & $p$=0.052 &  \\
within-team (teams seen in both states) & $\geq$3 at pit & $-$6.1 & [$-$9.9, 0.5] & $p$=0.078 &  \\
\bottomrule
\end{tabular}
\end{adjustbox}
\end{table}

%% file: tables/tab_selection.tex
\begin{table}[t]
\centering
\small
\caption{\textbf{No finite $K\leq8$ is selected on the V layer.} HASSE model selection. V layer (13 snapshots, 2,994 setups) and R layer (3 exact frames); held-out log-likelihood per setup over five series-grouped folds.}
\label{tab:selection}
\setlength{\tabcolsep}{4pt}
\begin{adjustbox}{max width=\linewidth}
\begin{tabular}{lrrrrrr}
\toprule
$K$ & Params & Held-out/setup & BIC & ICL & R: held-out & R: BIC \\
\midrule
1 & 76 & -92.85 & 556,150 & 556,150 & -22.48 & 73,517 \\
2 & 154 & -87.16 & 522,326 & 523,092 & -21.18 & 69,614 \\
3 & 235 & -83.83 & 502,918 & 503,899 & -20.70 & 68,225 \\
4 & 318 & -82.02 & 492,481 & 493,618 & -20.18 & 66,849 \\
5 & 403 & -80.82 & 485,443 & 487,048 & -19.87 & 66,306 \\
6 & 490 & -80.00 & 480,902 & 482,704 & -19.75 & 66,484 \\
7 & 579 & -79.59 & 478,926 & 481,145 & -19.68 & 66,689 \\
8 & 670 & -79.21 & 476,828 & 479,225 & -19.62 & 66,798 \\
\bottomrule
\end{tabular}
\end{adjustbox}
\end{table}

%% file: tables/tab_patterns.tex
\begin{table}[t]
\centering
\small
\caption{\textbf{Six patterns separate by objective class and by how they gather.} The pattern dictionary ($K=6$, V layer). Centre path $\mu_{c,t}$ at four of the 13 snapshots (T top, J jungle, M mid, B bot, S support; bars separate blocks), rigidity $\theta$ and inertia $\iota$ (per Hasse step) and the mixture weight in each objective class. Switching probability $\xi=0.137$.}
\label{tab:patterns}
\setlength{\tabcolsep}{4pt}
\begin{adjustbox}{max width=\linewidth}
\begin{tabular}{lrrrrrllll}
\toprule
 & $\theta$ & $\iota$ & $w_{river}$ & $w_{Atak.}$ & $w_{late}$ & $s{-}90$ & $s{-}30$ & $s$ & $s{+}60$ \\
\midrule
P0 & 1.00 & 0.90 & 0.14 & 0.33 & 0.18 & \texttt{T|J|M|B|S} & \texttt{T|J|M|B|S} & \texttt{BS|T|J|M} & \texttt{BS|T|J|M} \\
P1 & 1.28 & 1.13 & 0.25 & 0.01 & 0.00 & \texttt{BS|T|J|M} & \texttt{BS|T|J|M} & \texttt{BS|T|J|M} & \texttt{BS|T|J|M} \\
P2 & 0.98 & 0.86 & 0.23 & 0.29 & 0.16 & \texttt{T|J|M|B|S} & \texttt{BS|T|J|M} & \texttt{BS|T|J|M} & \texttt{JBS|T|M} \\
P3 & 0.56 & 0.72 & 0.10 & 0.11 & 0.38 & \texttt{JBS|T|M} & \texttt{JBS|T|M} & \texttt{JMBS|T} & \texttt{JBS|T|M} \\
P4 & 1.17 & 1.16 & 0.19 & 0.01 & 0.01 & \texttt{BS|T|J|M} & \texttt{T|J|M|B|S} & \texttt{BS|T|J|M} & \texttt{BS|T|J|M} \\
P5 & 0.73 & 0.65 & 0.09 & 0.26 & 0.27 & \texttt{JBS|T|M} & \texttt{JBS|T|M} & \texttt{JMBS|T} & \texttt{JMBS|T} \\
\bottomrule
\end{tabular}
\end{adjustbox}
\end{table}

%% file: tables/tab_hasse_tagnet.tex
\begin{table}[t]
\centering
\small
\caption{\textbf{HASSE refits agree where TAGNet seeds do not.} HASSE against TAGNet ported to the same V-layer tracks (TAGNet uses the 2,510 setups with at least 6 of 7 pre-spawn snapshots). HASSE rows are five independent fits, each the best of ten restarts; TAGNet rows are ten single runs per variant, with and without its context vector.}
\label{tab:hassetagnet}
\setlength{\tabcolsep}{4pt}
\begin{adjustbox}{max width=\linewidth}
\begin{tabular}{lll}
\toprule
 & HASSE ($K=6$) & TAGNet port \\
\midrule
Setups used & 2,994 (all) & 2,510 ($\geq$6 of 7 pre-spawn) \\
Choice of $K$ & held-out lik., BIC, ICL & fixed $K=6$ (K-Means) \\
Repeat-fit ARI & 0.890 (restarts) & 0.421 / 0.319 (seeds) \\
Series-bootstrap ARI & 0.819 (q05 0.756; whole model) & 0.813 (q05 0.608; K-Means only) \\
Agreement with counting baseline & -- & ARI 0.086 \\
Cramér's V with gold (clean) & 0.330 & 0.337 \\
Cramér's V with gold (context fed in) & 0.331 & 0.494 ($\times$1.24) \\
Parameters with units & $\theta,\iota$ (Hasse steps), $\xi$, zones & 64-d embedding \\
`only gather' test $H_{mono}$ & $\Delta$ held-out 0.027/setup (4/5 folds) & -- \\
\bottomrule
\end{tabular}
\end{adjustbox}
\end{table}

%% file: tables/tab_fights.tex
\begin{table}[t]
\centering
\small
\caption{\textbf{The larger side loses fewer players than the square law predicts.} Fight laws. Next death: logit of the next death falling on the larger side on $\ln(n_{large}/n_{small})$ and gold; Tullock: fight win on $\ln(k_i/k_j)$ and $\ln(\bar g_i/\bar g_j)$. Counts within 2,000 units 1 s before a kill; `tracked+V5' adds V5 killers and assisters (endogenous; sensitivity).}
\label{tab:fights}
\setlength{\tabcolsep}{4pt}
\begin{adjustbox}{max width=\linewidth}
\begin{tabular}{lllrrr}
\toprule
Model & Counts & Parameter & Est. & 95\% CI & $n$ \\
\midrule
Next death & tracked+V5 & intercept & $-$0.945 & [$-$1.162, $-$0.715] & 3,877 \\
Next death & tracked+V5 & $b_n$ (square $-1$, linear 0) & $-$2.327 & [$-$2.700, $-$1.959] & 3,877 \\
Next death & tracked+V5 & $b_g$ (gold) & $-$4.652 & [$-$5.901, $-$3.430] & 3,877 \\
Tullock fight & tracked+V5 & $\gamma_n$ (numbers) & 4.719 & [4.245, 5.272] & 2,139 \\
Tullock fight & tracked+V5 & $\gamma_g$ (gold) & 11.276 & [9.636, 13.097] & 2,139 \\
Tullock fight & tracked+V5 & $\gamma_{side}$ & $-$0.059 & [$-$0.199, 0.080] & 2,139 \\
Tullock fight & tracked+V5 & $\gamma_n/\gamma_g$ & 0.419 & [0.359, 0.499] & 2,139 \\
\midrule
Next death & tracked & intercept & $-$0.188 & [$-$0.400, 0.019] & 2,683 \\
Next death & tracked & $b_n$ (square $-1$, linear 0) & $-$0.566 & [$-$0.839, $-$0.286] & 2,683 \\
Next death & tracked & $b_g$ (gold) & $-$7.034 & [$-$8.138, $-$6.002] & 2,683 \\
Tullock fight & tracked & $\gamma_n$ (numbers) & 0.795 & [0.630, 0.952] & 1,481 \\
Tullock fight & tracked & $\gamma_g$ (gold) & 10.936 & [9.205, 12.830] & 1,481 \\
Tullock fight & tracked & $\gamma_{side}$ & $-$0.029 & [$-$0.154, 0.093] & 1,481 \\
Tullock fight & tracked & $\gamma_n/\gamma_g$ & 0.073 & [0.054, 0.095] & 1,481 \\
\bottomrule
\end{tabular}
\end{adjustbox}
\end{table}

%% file: tables/tab_shapley.tex
\begin{table}[t]
\centering
\small
\caption{\textbf{Curvature pulls trailing teams towards contesting; team behaviour pulls them away.} Decomposition of the trailing-minus-leading commitment gap (pp) and the named tests (theory direction in brackets).}
\label{tab:shapley}
\setlength{\tabcolsep}{4pt}
\begin{adjustbox}{max width=\linewidth}
\begin{tabular}{lrrr}
\toprule
Quantity & Est. & 95\% CI & $P(>0)$ \\
\midrule
observed gap & $-$19.19 & [$-$23.88, $-$15.14] & 0.00 \\
model gap (all channels) & $-$4.70 & [$-$6.71, $-$3.97] & 0.00 \\
model gap (channels off) & $-$5.15 & [$-$6.42, $-$4.22] & 0.00 \\
\midrule
Shapley: Lanchester & +0.15 & [0.05, 0.25] & 1.00 \\
Shapley: curvature of $W$ & +1.50 & [1.14, 2.10] & 1.00 \\
Shapley: second front & +0.30 & [0.21, 0.58] & 1.00 \\
Shapley: behavioural & $-$1.51 & [$-$3.42, $-$1.04] & 0.00 \\
\midrule
$H_{Simpson}$ raw gap [$<$ 0 (trailing contest less)] & $-$0.192 & [$-$0.237, $-$0.147] & $p$=0.000 \\
$H_{Simpson}$ matched (logit) [$\geq$ 0] & +0.240 & [$-$0.354, 0.778] & $p$=0.464 \\
$H_{trade}$ slope [$<$ 0] & +0.073 & [$-$0.026, 0.175] & $p$=0.156 \\
\bottomrule
\end{tabular}
\end{adjustbox}
\end{table}

%% file: tables/tab_vocab.tex
\begin{table}[t]
\centering
\small
\caption{\textbf{Formation vocabularies on the R layer, by objective and time.} Formation vocabulary on the R layer (five alive, risk set). Rarefied = expected distinct partitions at the smallest $n$.}
\label{tab:vocab}
\setlength{\tabcolsep}{4pt}
\begin{adjustbox}{max width=\linewidth}
\begin{tabular}{lrrrrr}
\toprule
Objective & Time & $n$ & Partitions & Shapes & Rarefied \\
\midrule
Atakhan & $s-60$ & 261 & 35 & 7 & 24.3 \\
Atakhan & $s$ & 283 & 39 & 7 & 23.6 \\
Atakhan & $s+60$ & 198 & 36 & 7 & 27.0 \\
Baron & $s-60$ & 239 & 41 & 7 & 27.8 \\
Baron & $s$ & 256 & 37 & 7 & 25.3 \\
Baron & $s+60$ & 200 & 37 & 7 & 27.4 \\
Dragon & $s-60$ & 304 & 21 & 5 & 12.8 \\
Dragon & $s$ & 317 & 18 & 4 & 11.8 \\
Dragon & $s+60$ & 266 & 20 & 6 & 14.3 \\
Grubs & $s-60$ & 294 & 26 & 6 & 15.8 \\
Grubs & $s$ & 317 & 22 & 6 & 14.1 \\
Grubs & $s+60$ & 17 & 9 & 4 & -- \\
Herald & $s-60$ & 289 & 24 & 6 & 16.8 \\
Herald & $s$ & 286 & 31 & 7 & 21.2 \\
Herald & $s+60$ & 90 & 23 & 7 & 23.0 \\
\bottomrule
\end{tabular}
\end{adjustbox}
\end{table}

%% file: tables/tab_hasse_robust.tex
\begin{table}[t]
\centering
\small
\caption{\textbf{$K=6$ is best among $K=3,\dots,6$ on the 7-snapshot grid and at both radii.} Discovery-layer robustness: held-out log-likelihood per setup for $K=3..6$ and the $K$ chosen by each criterion.}
\label{tab:hasserobust}
\setlength{\tabcolsep}{4pt}
\begin{adjustbox}{max width=\linewidth}
\begin{tabular}{lrrrrrrr}
\toprule
Layer & $K{=}3$ & 4 & 5 & 6 & best held-out & BIC & ICL \\
\midrule
main (13 snapshots, radius 1,900) & -83.83 & -82.02 & -80.82 & -80.00 & 6 & 6 & 6 \\
7 snapshots $s{-}90..s$ & -50.89 & -49.89 & -49.20 & -48.70 & 6 & 6 & 6 \\
formation radius 1,500 & -86.76 & -85.13 & -83.27 & -82.11 & 6 & 6 & 6 \\
formation radius 2,500 & -78.85 & -77.00 & -76.01 & -75.51 & 6 & 6 & 6 \\
\bottomrule
\end{tabular}
\end{adjustbox}
\end{table}

%% file: tables/tab_context.tex
\begin{table}[t]
\centering
\scriptsize
\caption{\textbf{Context effects on pattern occupancy are mostly small.} Soft-label multinomial logit of pattern occupancy (reference pattern 5), coefficients per SD; $^*$ = 95\% interval excludes 0 (500 series-block resamples).}
\label{tab:context}
\setlength{\tabcolsep}{4pt}
\begin{adjustbox}{max width=\linewidth}
\begin{tabular}{lrrrrrrrr}
\toprule
 & gold k & blue & minute & own soul pt & opp soul pt & elo diff & cls1 & cls2 \\
\midrule
P0 & -0.19$^*$ & -0.00 & -0.80$^*$ & -0.03 & -0.06 & -0.07$^*$ & +0.13$^*$ & +0.18$^*$ \\
P1 & -0.12 & -0.05 & -4.65$^*$ & +0.10 & +0.25$^*$ & -0.01 & +0.10 & +1.11$^*$ \\
P2 & -0.16$^*$ & -0.05 & -0.51$^*$ & -0.09$^*$ & -0.04 & -0.08$^*$ & -0.06 & -0.10$^*$ \\
P3 & -0.11$^*$ & -0.05$^*$ & +0.22$^*$ & -0.10$^*$ & -0.06 & -0.08$^*$ & -0.16$^*$ & +0.09 \\
P4 & -0.36$^*$ & +0.01 & -3.59$^*$ & +0.42$^*$ & +0.32$^*$ & +0.03 & -0.59$^*$ & +0.23 \\
\bottomrule
\end{tabular}
\end{adjustbox}
\end{table}

%% file: tables/tab_wfit.tex
\begin{table}[t]
\centering
\scriptsize
\caption{\textbf{The win surface fits out of fold in every game phase.} Out-of-fold fit of the outcome-only win surface $W$ by game phase.}
\label{tab:wfit}
\setlength{\tabcolsep}{4pt}
\begin{adjustbox}{max width=\linewidth}
\begin{tabular}{lrrrrrrr}
\toprule
$W$ specification & LL 1--13 & 14--24 & 25+ & Brier 1--13 & 14--24 & 25+ & max cal. dev. (pp) \\
\midrule
pooled, rule-specific objective terms (main) & 0.625 & 0.439 & 0.317 & 0.218 & 0.143 & 0.100 & 1.3 \\
same-rule games only & 0.657 & 0.455 & 0.288 & 0.233 & 0.150 & 0.093 & 3.7 \\
all games, shared terms & 0.624 & 0.439 & 0.319 & 0.218 & 0.143 & 0.101 & 1.3 \\
\bottomrule
\end{tabular}
\end{adjustbox}
\end{table}

%% file: tables/tab_qre_variants.tex
\begin{table}[t]
\centering
\scriptsize
\caption{\textbf{All Shapley signs agree with the main fit in the six non-degenerate variants.} LB-QRE specification variants. All Shapley signs agree with the main fit in the six non-degenerate variants; the two no-cost variants collapse to $\lambda=0$.}
\label{tab:qrevariants}
\setlength{\tabcolsep}{4pt}
\begin{adjustbox}{max width=\linewidth}
\begin{tabular}{lrrrrrrrrrr}
\toprule
Variant & $n$ & $\ell$ & $\lambda$ & $\nu$ & Obs. gap & Model gap & $\phi_L$ & $\phi_C$ & $\phi_F$ & $\phi_B$ \\
\midrule
main & 1,454 & -8,343 & 4.30 & 0.018 & $-$0.192 & $-$0.047 & 0.0015 & 0.0150 & 0.0030 & $-$0.0151 \\
drop held/interp. & 762 & -4,264 & 12.06 & 0.017 & $-$0.148 & $-$0.042 & 0.0047 & 0.0211 & 0.0092 & $-$0.0117 \\
V5 fight law & 1,454 & -8,318 & 4.78 & 0.121 & $-$0.192 & $-$0.035 & 0.0126 & 0.0137 & 0.0032 & $-$0.0130 \\
no costs (original spec.) & 1,454 & -9,254 & 0.00 & 0.993 & $-$0.192 & $-$0.027 & 0.0000 & 0.0000 & 0.0000 & $-$0.0006 \\
no costs + V5 law & 1,454 & -9,254 & 0.00 & 1.006 & $-$0.192 & $-$0.027 & 0.0000 & 0.0000 & 0.0000 & $-$0.0006 \\
pit radius 1,000 & 1,451 & -7,732 & 3.14 & 0.027 & $-$0.112 & $-$0.031 & 0.0008 & 0.0072 & 0.0014 & $-$0.0106 \\
pit radius 2,000 & 1,457 & -8,822 & 5.52 & 0.017 & $-$0.241 & $-$0.064 & 0.0015 & 0.0235 & 0.0054 & $-$0.0194 \\
`result' values & 1,454 & -8,386 & 2.91 & 0.024 & $-$0.192 & $-$0.045 & 0.0016 & 0.0151 & 0.0014 & $-$0.0175 \\
$\nu=1$ ($H_{winmax}$) & 1,454 & -8,403 & 0.73 & 1.000 & $-$0.192 & $-$0.043 & 0.0004 & 0.0075 & 0.0005 & $-$0.0168 \\
\bottomrule
\end{tabular}
\end{adjustbox}
\end{table}

%% file: tables/tab_team_params.tex
\begin{table}[t]
\centering
\scriptsize
\caption{\textbf{Team parameters come with wide intervals.} Team LB-QRE parameters with 95\% series-block bootstrap intervals (shrunk towards the global values).}
\label{tab:teamparams}
\setlength{\tabcolsep}{4pt}
\begin{adjustbox}{max width=\linewidth}
\begin{tabular}{lrlllll}
\toprule
Team & Contests & $\lambda$ & $\kappa$ & $\psi$ & $\omega_k$ & $\omega_m$ \\
\midrule
T1 & 421 & 3.3 [1.8, 6.3] & 1.73 [1.44, 2.07] & 1.39 [0.67, 2.55] & 0.21 [-0.01, 0.36] & 0.65 [0.56, 0.78] \\
Gen.G & 288 & 3.7 [2.2, 7.3] & 1.68 [1.48, 1.87] & 1.84 [1.46, 2.37] & 0.04 [-0.08, 0.14] & 0.78 [0.68, 0.93] \\
Anyone's Legend & 281 & 2.8 [1.0, 7.8] & 2.20 [1.81, 2.81] & 1.52 [0.92, 2.34] & 0.23 [0.07, 0.37] & 0.64 [0.53, 0.87] \\
G2 Esports & 265 & 6.2 [3.2, 12.8] & 1.75 [1.42, 2.17] & 2.35 [1.51, 3.45] & 0.16 [-0.04, 0.34] & 0.76 [0.57, 1.22] \\
Bilibili Gaming & 252 & 2.6 [1.3, 6.5] & 2.06 [1.81, 2.41] & 1.63 [0.55, 3.04] & 0.09 [-0.14, 0.36] & 0.58 [0.48, 0.76] \\
CTBC Flying Oyster & 220 & 6.0 [2.8, 12.9] & 1.92 [1.53, 2.49] & 1.64 [0.88, 2.49] & 0.20 [0.08, 0.38] & 0.81 [0.69, 1.23] \\
FlyQuest & 177 & 4.5 [1.9, 9.0] & 1.96 [1.35, 2.30] & 2.26 [1.36, 3.41] & 0.20 [-0.01, 0.50] & 0.63 [0.50, 0.82] \\
Movistar KOI & 164 & 9.9 [3.6, 16.8] & 1.74 [1.25, 2.38] & 2.50 [1.54, 3.66] & 0.24 [-0.01, 0.46] & 0.78 [0.57, 1.03] \\
KT Rolster & 150 & 4.0 [2.9, 7.5] & 1.47 [0.95, 1.90] & 1.65 [0.93, 2.47] & 0.04 [-0.08, 0.23] & 0.81 [0.56, 1.09] \\
Top Esports & 124 & 10.3 [6.0, 16.3] & 1.31 [0.53, 2.13] & 2.15 [1.52, 3.37] & 0.22 [-0.06, 0.56] & 0.87 [0.65, 1.09] \\
GAM Esports & 115 & 5.1 [2.3, 13.3] & 1.55 [1.06, 2.77] & 2.56 [1.67, 3.26] & 0.27 [0.14, 0.77] & 0.63 [0.50, 1.02] \\
FURIA & 92 & 4.3 [3.8, 6.7] & 2.09 [1.78, 2.32] & 1.32 [0.99, 2.27] & 0.36 [0.22, 0.42] & 0.64 [0.61, 0.85] \\
Hanwha Life Esports & 91 & 2.3 [1.9, 4.8] & 1.56 [1.32, 2.06] & 1.95 [1.41, 2.56] & 0.15 [-0.05, 0.37] & 0.68 [0.51, 0.93] \\
Vivo Keyd Stars & 59 & 3.9 [1.9, 11.2] & 2.99 [2.11, 3.54] & 1.32 [0.06, 2.87] & 0.57 [0.29, 0.96] & 0.50 [0.38, 0.83] \\
Team Secret Whales & 53 & 11.8 [4.3, 21.7] & 1.45 [0.82, 2.14] & 1.90 [0.87, 3.21] & 0.26 [0.06, 0.46] & 0.86 [0.60, 1.35] \\
100 Thieves & 44 & 2.7 [1.8, 20.8] & 1.62 [0.61, 2.63] & 2.10 [1.38, 3.04] & 0.35 [0.17, 0.84] & 0.69 [0.59, 0.91] \\
PSG Talon & 40 & 23.9 [4.6, 46.1] & 2.00 [1.36, 2.35] & 1.66 [1.35, 2.42] & 0.55 [0.26, 1.10] & 1.41 [0.75, 1.76] \\
Fnatic & 36 & 14.3 [3.3, 28.3] & 1.94 [1.37, 2.26] & 2.69 [1.85, 3.66] & 0.35 [0.20, 0.77] & 1.13 [0.76, 1.35] \\
Invictus Gaming & 36 & 3.4 [2.4, 7.0] & 1.56 [1.50, 1.92] & 1.18 [0.58, 2.33] & 0.28 [0.20, 0.37] & 0.77 [0.72, 0.86] \\
\bottomrule
\end{tabular}
\end{adjustbox}
\end{table}

%% file: tables/tab_usage.tex
\begin{table}[t]
\centering
\scriptsize
\caption{\textbf{Split-half reliability of team pattern usage is low.} Pattern usage by team ($K=6$). Intervals in Fig.~\ref{fig:fingerprints}; split-half reliability is low.}
\label{tab:usage}
\setlength{\tabcolsep}{4pt}
\begin{adjustbox}{max width=\linewidth}
\begin{tabular}{lrrrrrrrr}
\toprule
Team & Setups & P0 & P1 & P2 & P3 & P4 & P5 & TV to league \\
\midrule
T1 & 394 & 0.21 & 0.20 & 0.16 & 0.13 & 0.09 & 0.21 & 0.065 \\
Gen.G & 266 & 0.20 & 0.18 & 0.18 & 0.13 & 0.09 & 0.21 & 0.038 \\
Anyone's Legend & 260 & 0.24 & 0.15 & 0.21 & 0.14 & 0.08 & 0.18 & 0.043 \\
G2 Esports & 258 & 0.18 & 0.18 & 0.18 & 0.14 & 0.11 & 0.22 & 0.062 \\
Bilibili Gaming & 244 & 0.18 & 0.17 & 0.20 & 0.15 & 0.09 & 0.22 & 0.042 \\
CTBC Flying Oyster & 208 & 0.24 & 0.16 & 0.17 & 0.14 & 0.10 & 0.19 & 0.056 \\
FlyQuest & 165 & 0.22 & 0.16 & 0.19 & 0.16 & 0.10 & 0.18 & 0.042 \\
Movistar KOI & 161 & 0.19 & 0.15 & 0.24 & 0.14 & 0.07 & 0.21 & 0.048 \\
KT Rolster & 147 & 0.19 & 0.21 & 0.15 & 0.12 & 0.09 & 0.24 & 0.099 \\
Top Esports & 121 & 0.22 & 0.19 & 0.19 & 0.12 & 0.10 & 0.19 & 0.051 \\
GAM Esports & 113 & 0.17 & 0.19 & 0.18 & 0.18 & 0.09 & 0.19 & 0.072 \\
FURIA & 88 & 0.20 & 0.17 & 0.22 & 0.17 & 0.06 & 0.17 & 0.048 \\
Hanwha Life Esports & 82 & 0.20 & 0.13 & 0.20 & 0.18 & 0.06 & 0.22 & 0.070 \\
Vivo Keyd Stars & 55 & 0.17 & 0.17 & 0.24 & 0.19 & 0.07 & 0.16 & 0.090 \\
Team Secret Whales & 51 & 0.24 & 0.17 & 0.24 & 0.14 & 0.05 & 0.16 & 0.076 \\
100 Thieves & 40 & 0.16 & 0.18 & 0.23 & 0.13 & 0.08 & 0.22 & 0.063 \\
PSG Talon & 38 & 0.23 & 0.15 & 0.26 & 0.14 & 0.09 & 0.13 & 0.092 \\
Fnatic & 36 & 0.32 & 0.10 & 0.28 & 0.10 & 0.05 & 0.16 & 0.200 \\
Invictus Gaming & 31 & 0.16 & 0.20 & 0.15 & 0.14 & 0.10 & 0.24 & 0.104 \\
\bottomrule
\end{tabular}
\end{adjustbox}
\end{table}

%% file: tables/tab_faker.tex
\begin{table}[t]
\centering
\small
\caption{\textbf{Faker arrives earlier but is no more central in the formation.} A3: Faker against other starting mid laners in pre-spawn snapshots ($n$ = 432 vs 2,545).}
\label{tab:faker}
\setlength{\tabcolsep}{4pt}
\begin{adjustbox}{max width=\linewidth}
\begin{tabular}{lrrrr}
\toprule
Measure & Faker & Other mids & Diff. & 95\% CI \\
\midrule
in pit block at $s$ & 0.039 & 0.031 & 0.008 & [$-$0.000, 0.018] \\
arrival rank & 2.877 & 2.993 & $-$0.116 & [$-$0.226, $-$0.027] \\
geodesic distance to pit & 5,563 & 6,033 & $-$469 & [$-$709, $-$275] \\
alone & 0.631 & 0.622 & 0.009 & [$-$0.039, 0.063] \\
in main block & 0.337 & 0.333 & 0.004 & [$-$0.042, 0.049] \\
MST degree & 1.481 & 1.474 & 0.007 & [$-$0.023, 0.041] \\
MST betweenness & 0.259 & 0.253 & 0.006 & [$-$0.011, 0.023] \\
\bottomrule
\end{tabular}
\end{adjustbox}
\end{table}

%% file: tables/tab_t1_state.tex
\begin{table}[t]
\centering
\small
\caption{\textbf{T1's behaviour by gold state matches the rest of the league.} A5: T1 and all other teams by gold state at the commitment time (commit = $k\geq3$; polarisation = $k\in\{0,5\}$).}
\label{tab:t1state}
\setlength{\tabcolsep}{4pt}
\begin{adjustbox}{max width=\linewidth}
\begin{tabular}{lrrrrr}
\toprule
Group & State & $n$ & Commit & Polar. & Mean $k$ \\
\midrule
T1 & trailing & 93 & 0.097 & 0.731 & 0.60 \\
others & trailing & 652 & 0.107 & 0.724 & 0.60 \\
T1 & even & 211 & 0.256 & 0.460 & 1.31 \\
others & even & 1205 & 0.217 & 0.501 & 1.14 \\
T1 & leading & 117 & 0.291 & 0.274 & 1.57 \\
others & leading & 628 & 0.299 & 0.277 & 1.64 \\
\bottomrule
\end{tabular}
\end{adjustbox}
\end{table}

%% file: tables/tab_ledger.tex
\begin{table}[t]
\centering
\scriptsize
\caption{\textbf{The ledger lists every spawn of the Worlds~2025 final.} A7: decision ledger of the Worlds 2025 final (T1 3--2 KT Rolster). Actions $(k,m,\tau)$ at the commitment time, blue's fight probability $\hat p$, break-even $\pi^*$ and the objective's value to blue (pts).}
\label{tab:ledger}
\setlength{\tabcolsep}{4pt}
\begin{adjustbox}{max width=\linewidth}
\begin{tabular}{rlrrrllrrrc}
\toprule
G & Spawn & Min & $W_B$ & GD$_B$ & Blue & Red & $\hat p$ & $\pi^*$ & $V_B$ & Took \\
\midrule
1 & Drag 1 & 5.0 & 0.54 & +21 & (2,1,0) & (0,1,0) & 0.51 & 0.38 & +5.8 & B \\
1 & Grubs 1 & 8.0 & 0.66 & -7 & (1,1,1) & (0,3,0) & 0.50 & 0.45 & +3.5 & B \\
1 & Drag 2 & 10.9 & 0.74 & +302 & (3,0,0) & (2,0,0) & 0.65 & 0.44 & +6.0 & R \\
1 & Herald 1 & 15.0 & 0.64 & +71 & (3,0,0) & (0,0,0) & 0.55 & 0.45 & +5.5 & B \\
1 & Drag 3 & 16.8 & 0.82 & +1,472 & (0,2,0) & (3,1,0) & 0.64 & 0.45 & +5.7 & R \\
1 & Atak 1 & 20.0 & 0.63 & +564 & (2,0,0) & (0,0,0) & 0.55 & 0.25 & +17.6 & R \\
1 & Drag 4 & 22.4 & 0.63 & +713 & (2,0,0) & (3,0,0) & 0.55 & 0.23 & +9.6 & B \\
1 & Baron 1 & 25.0 & 0.45 & -278 & (0,0,0) & (3,0,0) & 0.48 & 0.13 & +33.9 & R \\
1 & Drag 6 & 33.7 & 0.19 & -4,059 & (0,0,0) & (1,1,1) & 0.32 & 0.18 & +14.6 & R \\
1 & Baron 2 & 33.8 & 0.19 & -4,059 & (0,0,0) & (3,0,0) & 0.32 & 0.15 & +39.2 & -- \\
2 & Drag 1 & 5.0 & 0.65 & +616 & (0,1,0) & (0,1,0) & 0.74 & 0.40 & +5.2 & B \\
2 & Grubs 1 & 8.0 & 0.73 & +1,075 & (1,2,0) & (0,4,0) & 0.74 & 0.47 & +3.0 & B \\
2 & Drag 2 & 12.2 & 0.80 & +1,722 & (1,1,0) & (0,1,0) & 0.74 & 0.48 & +5.2 & B \\
2 & Herald 1 & 15.0 & 0.87 & +1,976 & (3,1,0) & (0,2,0) & 0.72 & 0.56 & +2.5 & B \\
2 & Drag 3 & 17.7 & 0.90 & +2,485 & (3,0,0) & (0,0,0) & 0.72 & 0.43 & +7.8 & R \\
2 & Atak 1 & 20.0 & 0.42 & -509 & (4,0,0) & (1,1,1) & 0.46 & 0.22 & +21.3 & R \\
2 & Drag 4 & 23.5 & 0.14 & -3,860 & (3,0,0) & (3,1,0) & 0.26 & 0.17 & +15.6 & R \\
2 & Baron 1 & 25.0 & 0.16 & -3,823 & (0,1,0) & (3,1,0) & 0.27 & 0.10 & +35.9 & R \\
2 & Drag 5 & 29.3 & 0.16 & -4,272 & (3,0,1) & (0,0,0) & 0.28 & 0.20 & +15.0 & R \\
2 & Baron 2 & 34.9 & 0.03 & -8,952 & (0,0,0) & (3,0,0) & 0.17 & 0.09 & +23.0 & R \\
2 & Drag 6 & 35.4 & 0.03 & -8,952 & (0,1,0) & (0,4,0) & 0.09 & 0.12 & +4.6 & -- \\
3 & Drag 1 & 5.0 & 0.70 & +804 & (0,1,0) & (2,1,0) & 0.80 & 0.42 & +4.7 & R \\
3 & Grubs 1 & 8.0 & 0.57 & +629 & (1,1,0) & (0,1,0) & 0.66 & 0.42 & +3.9 & B \\
3 & Drag 2 & 12.0 & 0.85 & +2,131 & (0,0,0) & (1,1,0) & 0.79 & 0.50 & +3.6 & R \\
3 & Herald 1 & 15.0 & 0.79 & +2,335 & (1,0,1) & (0,0,0) & 0.75 & 0.51 & +3.9 & B \\
3 & Drag 3 & 17.7 & 0.87 & +3,502 & (1,0,0) & (0,0,0) & 0.87 & 0.36 & +4.0 & B \\
3 & Atak 1 & 20.0 & 0.67 & +2,255 & (2,0,0) & (0,0,0) & 0.74 & 0.26 & +16.1 & B \\
3 & Baron 1 & 25.0 & 0.91 & +4,347 & (3,1,0) & (0,5,0) & 0.71 & 0.11 & +12.3 & B \\
3 & Drag 4 & 26.0 & 0.92 & +4,807 & (3,0,0) & (0,2,0) & 0.76 & 0.25 & +3.1 & B \\
3 & Drag 5 & 31.8 & 0.85 & +3,332 & (3,1,0) & (0,4,0) & 0.66 & 0.39 & +8.7 & B \\
3 & Drag 6 & 37.3 & 0.98 & +9,905 & (3,0,0) & (1,0,0) & 0.83 & 0.51 & +9.1 & B \\
3 & Baron 2 & 40.3 & 0.97 & +9,097 & (4,0,0) & (0,0,0) & 0.82 & 0.50 & +4.7 & B \\
4 & Drag 1 & 5.0 & 0.64 & +372 & (0,1,0) & (2,1,0) & 0.66 & 0.40 & +5.2 & B \\
4 & Grubs 1 & 8.0 & 0.50 & -482 & (1,1,1) & (0,1,0) & 0.38 & 0.41 & +4.0 & B \\
4 & Drag 2 & 10.8 & 0.73 & +353 & (2,2,0) & (1,1,0) & 0.56 & 0.43 & +6.2 & B \\
4 & Herald 1 & 15.0 & 0.75 & -58 & (2,1,0) & (0,3,0) & 0.49 & 0.49 & +4.4 & B \\
4 & Drag 3 & 16.5 & 0.83 & +422 & (2,1,0) & (4,0,0) & 0.55 & 0.41 & +11.4 & B \\
4 & Atak 1 & 20.0 & 0.83 & +504 & (4,0,0) & (4,0,0) & 0.54 & 0.29 & +9.6 & B \\
4 & Drag 4 & 22.1 & 0.92 & +1,196 & (4,0,0) & (3,0,0) & 0.59 & 0.20 & +17.2 & B \\
4 & Baron 1 & 25.0 & 0.99 & +4,488 & (3,1,0) & (2,0,0) & 0.81 & 0.07 & +3.0 & B \\
5 & Drag 1 & 5.0 & 0.49 & -320 & (4,1,0) & (0,1,0) & 0.36 & 0.37 & +6.0 & B \\
5 & Grubs 1 & 8.0 & 0.57 & -712 & (1,1,1) & (1,1,0) & 0.33 & 0.42 & +3.9 & R \\
5 & Drag 2 & 11.2 & 0.48 & -1,095 & (0,1,0) & (2,1,0) & 0.32 & 0.36 & +8.6 & R \\
5 & Herald 1 & 15.0 & 0.28 & -1,800 & (3,0,0) & (0,1,0) & 0.29 & 0.33 & +5.3 & B \\
5 & Drag 3 & 16.7 & 0.23 & -2,446 & (3,0,0) & (0,1,0) & 0.26 & 0.28 & +8.8 & B \\
5 & Atak 1 & 20.0 & 0.34 & -2,489 & (0,0,0) & (3,0,0) & 0.30 & 0.21 & +21.2 & R \\
5 & Drag 4 & 22.4 & 0.45 & -2,431 & (0,0,0) & (3,0,1) & 0.12 & 0.22 & +21.2 & R \\
5 & Baron 1 & 25.0 & 0.04 & -6,216 & (0,4,0) & (0,5,0) & 0.17 & 0.06 & +24.2 & R \\
5 & Drag 5 & 27.6 & 0.10 & -5,733 & (1,0,1) & (3,0,0) & 0.21 & 0.17 & +11.8 & R \\
5 & Drag 6 & 34.6 & 0.03 & -10,456 & (0,0,0) & (1,0,1) & 0.13 & 0.12 & +4.7 & R \\
5 & Baron 2 & 34.9 & 0.03 & -10,456 & (0,0,0) & (3,0,0) & 0.13 & 0.09 & +22.7 & -- \\
\bottomrule
\end{tabular}
\end{adjustbox}
\end{table}